\newtheorem{LEM}{Lemma} 
\newtheorem{THE}{Theorem}
\newtheorem{PRO}{Proposition} 
\newtheorem{DEF}{Definition} 
\newtheorem{example}{Example}
\def\hy{\hbox{-}\nobreak\hskip0pt} 
\newcommand{\SB}{\{\,}%
\newcommand{\SM}{\;{:}\;}%
\newcommand{\SE}{\,\}}%
\newcommand{\Card}[1]{|#1|}
\let\phi=\varphi
\let\epsilon=\varepsilon
\newcommand{\SSS}{\mathsf{S}}
\newcommand{\CCC}{\ensuremath{\mathcal{C}}}
\newcommand{\ComplClass}[1]{\text{\normalfont #1}}
\newcommand{\NP}{\ComplClass{NP}}
\newcommand{\coNP}{co-\NP{}}
\newcommand{\stableset}{\text{\normalfont AS}}
\newcommand{\at}{\text{\normalfont at}}
\newcommand{\ta}[1]{\text{\normalfont ta($#1$)}}
\newcommand{\class}[1]{{\bf #1\normalfont}}
\newcommand{\pnot}{\neg}
\newcommand{\por}{\vee}
\begin{document}
\title{The Good, the Bad, and the Odd: \\ Cycles in Answer-Set Programs}
\author{Johannes Klaus Fichte\\
Institute of Information Systems\\
Vienna University of Technology, Vienna, Austria\\
fichte@kr.tuwien.ac.at}
\maketitle
\begin{abstract}
Backdoors of answer-set programs are sets of atoms that represent ``clever reasoning shortcuts'' through the search space. Assignments to backdoor atoms reduce the given program to several programs that belong to a tractable target class. Previous research has considered target classes based on notions of acyclicity where various types of cycles (good and bad cycles) are excluded from graph representations of programs. We generalize the target classes by taking the parity of the number of negative edges on bad cycles into account and consider backdoors for such classes. 
We establish new hardness results and non-uniform polynomial-time tractability 
relative to directed or undirected cycles. 
\end{abstract}

\section{Introduction}
Answer-set programming (ASP) is a popular framework to describe concisely search and combinatorial problems~\cite{MarekTruszczynski99,Niemela99}.
It has been successfully applied in crypto-analysis, code optimization, the semantic web, and several other fields~\cite{Schaub08}. 
Problems are encoded by rules and constraints into disjunctive logic programs whose solutions are answer-sets (stable models). The complexity of finding an answer-set for a disjunctive logic program is $\Sigma^P_2$-complete~\cite{EiterGottlob95}. However this hardness result does not exclude quick solutions for large instances if we can exploit structural properties that might be present in real-world instances.

Recently, Fichte and Szeider~\cite{FichteSzeider11} have established a new approach to ASP based on the idea of backdoors, a concept that originates from the area of satisfiability~\cite{WilliamsGomesSelman03}.
Backdoors exploit the structure of instances by identifying sets of atoms that are important for reasoning. 
A \emph{backdoor} of a disjunctive logic program is a set of variables such that any instantiation of the variables yields a simplified logic program that lies in a class of programs where the decision problem we are interested in is tractable. By means of a backdoor of size $k$ for a disjunctive logic program we can solve the program by solving all the $2^k$ tractable programs that correspond to the truth assignments of the atoms in the backdoor.
For each answer set of each of the $2^k$ tractable programs we need to check whether it gives rise to an answer set of the given program.  In order to do this efficiently we consider tractable programs that have a small number of answer sets (e.g., stratified programs~\cite{GelfondLifschitz88}).

We consider target classes based on various notions of acyclicity on the  \emph{directed/undirected dependency graph} of the disjunctive logic program. A cycle is \emph{bad} if it contains an edge that represents an atom from a negative body of a rule. Since larger target classes facilitate smaller backdoors, we are interested in large target classes that allow small backdoors and efficient algorithms for finding the backdoors.

\subsection*{Contribution}
In this paper, we extend the backdoor approach of~\cite{FichteSzeider11} using ideas from Zhao~\cite{Zhao02}. We enlarge the target classes by taking the parity of the number of negative edges or vertices on bad cycles into account and consider backdoors with respect to such classes. This allows us to consider larger classes that also contain non-stratified programs.
Our main results are as follows:
\begin{enumerate}
	\item For target classes based on directed bad even cycles, the detection of backdoors of bounded size is intractable (Theorem~\ref{the:paranp}).
	 \item For target classes based on undirected bad even cycles, the detection of backdoors is polynomial-time tractable (Theorem~\ref{the:xp}).
\end{enumerate}
The result (2) is a \emph{non-uniform} polynomial-time result since the order of the polynomial depends on the backdoor size. An algorithm is \emph{uniform} pol\-y\-no\-mial-time tractable if it runs in time $\mathcal{O}(f(k)\cdot n^c)$ where $f$ is an arbitrary function and $c$ is a constant independent from $k$. Uniform polynomial-time tractable problems are also known as fixed-parameter tractable problems~\cite{DowneyFellows99}.
We provide strong theoretical evidence that result (2) cannot be extended to uniform polynomial-time tractability. Further, we establish that result (2) generalizes a result of Lin and Zhao~\cite{LinZhao04}.

\section{Formal Background}
We consider a universe $U$ of propositional \emph{atoms}.  A
\emph{literal} is an atom $a\in U$ or its negation $\neg a$.
A \emph{disjunctive logic program} (or simply a \emph{program}) $P$ is a
set of \emph{rules} of the following form
\begin{eqnarray}
  x_1\por \dots \por x_l \quad\leftarrow\quad y_1,\dots,y_n,\pnot z_1,\dots,\pnot z_m.
\end{eqnarray}
where $x_1,\dots,x_l, y_1,\dots,y_n, z_1,\dots, z_m$ are atoms and
$l,n,m$ are non-negative integers. Let $r$ be a rule. We write $\{x_1,\dots,x_l\}=H(r)$
(the \emph{head} of $r$) and $\{y_1,\dots,y_n,z_1,\dots,z_m\}=B(r)$ (the
\emph{body} of $r$). We abbreviate the positive literals of the body by $B^+(r)= \{y_1,\dots,y_n\}$ and the negative literals by $B^-(r)=
\{z_1,\dots,z_m\}$.  We denote the sets of atoms occurring in a rule $r$
or in a program $P$ by $\at(r)=H(r) \cup B(r)$ and $\at(P)=\bigcup_{r\in
  P} \at(r)$, respectively. A rule $r$ is \emph{normal} if $\Card{H(r)}=1$. A rule is \emph{Horn} if normal and $\Card{B^-(r)}=0$. We say that a program has a certain property if all its rules have the property. \class{Horn} refers to the class of all Horn programs.

A set $M$ of atoms \emph{satisfies} a rule $r$ if $(H(r)\,\cup\, B^-(r))
\,\cap\, M \neq \emptyset$ or $B^+(r) \setminus M \neq \emptyset$.  $M$ is a
\emph{model} of $P$ if it satisfies all rules of $P$.  The \emph{Gelfond-Lifschitz (GL)
  reduct} of a program $P$ under a set $M$ of atoms is the program $P^M$
obtained from $P$ by first removing all rules $r$ with $B^-(r)\cap M\neq
\emptyset$ and second removing all $\neg z$ where $z \in B^-(r)$ from the  remaining rules $r$
\cite{GelfondLifschitz91}.  $M$ is an \emph{answer-set} (or \emph{stable
  set}) of a program $P$ if $M$ is a minimal model of $P^M$. We
denote by $\stableset(P)$ the set of all answer-sets of~$P$. 
The main computational problems in ASP are:
\begin{itemize}
	\item \textsc{Consistency}: given a program $P$, does $P$ have an answer-set? 
	\item \textsc{Credulous/Skeptical Reasoning}: given a program $P$ and an atom $a\in \at(P)$, is $a$ contained in some/all answer-set(s) of $P$? 
	\item \textsc{AS Counting}: how many answer-sets does $P$ have? 
	\item \textsc{AS Enumeration}: list all answer-sets of $P$.
\end{itemize}

A \emph{truth assignment} is a mapping $\tau:X\rightarrow
\{0,1\}$ defined for a set $X\subseteq U$ of atoms. For $x\in X$ we put 
$\tau(\pnot x)=1 - \tau(x)$. By $\ta{X}$ we denote
the set of all truth assignments $\tau:X\rightarrow \{0,1\}$.
Let $\tau\in \ta{X}$ and $P$ be a program.  

\subsection{Strong Backdoors}
Backdoors are small sets of atoms which can be used to simplify the considered computational problems in ASP. They have originally been introduced by Williams, Gomes, and Selman~\cite{WilliamsGomesSelman03,WilliamsGomesSelman03a} as a concept to the analysis of decision heuristics in propositional satisfiability~\cite{GaspersSzeider11}. Fichte and Szeider~\cite{FichteSzeider11} have recently adapted backdoors to the field of ASP. First, we define a reduct of a program with respect to a given set of atoms. Subsequently, we give the notion of strong backdoors. In the following we refer to $\CCC$ as the \emph{target class} of the backdoor.

\begin{DEF}\label{def:truthassignment-reduct}
Let $P$ be a program, $X$ a set of atoms, and $\tau\in \ta{X}$. The \emph{truth assignment reduct}
of $P$ under $\tau$ is the logic program $P_\tau$ obtained 
by
\begin{enumerate}
\item removing all rules $r$ with $H(r)\cap \tau^{-1}(1)\neq \emptyset$
  or $H(r)\subseteq X$;
\item removing all rules $r$ with $B^+(r) \cap \tau^{-1}(0)\neq
  \emptyset$;
\item removing all rules $r$ with $B^-(r) \cap \tau^{-1}(1)\neq
  \emptyset$;
\item removing from the heads and bodies of the remaining rules all
  literals $v,\pnot v$ with $v\in X$.
\end{enumerate}
\end{DEF}
%
%
%

\begin{DEF}
A set $X$ of atoms is a \emph{strong $\CCC$\hy backdoor} of a program $P$ if $P_{\tau}\in \CCC$ for all truth assignments $\tau\in \ta{X}$.
We define the problem of finding strong backdoors as follows: 
$k$-\textsc{Strong $\CCC\hy$Backdoor Detection}: given a program $P$, find a strong $\CCC$\hy backdoor $X$ of $P$ of size at most $k$, or report that such $X$ does not exist.
\end{DEF}

\begin{example}\label{ex:strong-bds}
Consider the program 
\(
	P=\{ b \leftarrow a;\, d \leftarrow a;\, b \leftarrow \pnot c;\, a \leftarrow d, \pnot c;\, a \por c \leftarrow d, \pnot b;\, d \}.
\)
The set $X=\{b,c\}$ is a strong $\class{Horn}$\hy backdoor since the truth assignment reducts
\(
	P_{b=0,c=0}=P_{00}=\{ \leftarrow a;\, d \leftarrow a;\, a \leftarrow d;\, d \}		
\),
\(
	P_{01}=\{ \leftarrow a;\, d \leftarrow a;\, d \}
\),
\(
	P_{10}=\{ d \leftarrow a;\, a \leftarrow d;\, d \}
\), and
\(
	P_{11}=\{ d \leftarrow a;\, d \}
\) 
are in the target class \class{Horn}.
\end{example}

\begin{DEF}
Let $P$ be a program and $X$ a set of atoms. We define 
\[
\stableset(P,X) = \SB M\cup \tau^{-1}(1) \SM \tau\in \ta{X\cap\, \at(P)}, M \in \stableset(P_\tau)\SE\,.
\]
\end{DEF} 

\begin{LEM}[\cite{FichteSzeider11}]
	\label{lem:subset}
  $\stableset(P) \subseteq \stableset(P,X)$ holds for every program $P$
  and every set $X$ of atoms.
\end{LEM}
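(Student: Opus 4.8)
The plan is to take an arbitrary answer-set $M$ of $P$ and exhibit a truth assignment $\tau \in \ta{X \cap \at(P)}$ together with a set $M' \in \stableset(P_\tau)$ such that $M = M' \cup \tau^{-1}(1)$; this immediately places $M$ in $\stableset(P,X)$. The natural candidate is $\tau := \tau_M$, the restriction of (the characteristic function of) $M$ to $X \cap \at(P)$, so that $\tau^{-1}(1) = M \cap X \cap \at(P)$, and $M' := M \setminus X$. With this choice $M = M' \cup \tau^{-1}(1)$ holds trivially, since atoms of $M$ outside $\at(P)$ are irrelevant and every atom of $M \cap \at(P)$ is either in $M'$ (if not in $X$) or in $\tau^{-1}(1)$ (if in $X$). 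So the entire content of the lemma reduces to the claim that $M \setminus X$ is an answer-set of $P_\tau$.

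First I would unfold what it means for $M\setminus X$ to be an answer-set of $P_\tau$: it must be a minimal model of $(P_\tau)^{M\setminus X}$. The key structural step is to compare the two reducts $(P_\tau)^{M\setminus X}$ and $P^M$ rule by rule. Recall that $M$ being an answer-set of $P$ means $M$ is a minimal model of $P^M$. A rule $r \in P$ survives into $P^M$ exactly when $B^-(r) \cap M = \emptyset$, and then it appears with its negative body stripped. A rule $r \in P$ survives the truth-assignment reduct (Definition~\ref{def:truthassignment-reduct}) exactly when $H(r) \cap \tau^{-1}(1) = \emptyset$, $H(r) \not\subseteq X$, $B^+(r) \cap \tau^{-1}(0) = \emptyset$, and $B^-(r) \cap \tau^{-1}(1) = \emptyset$, after which all $X$-literals are removed from head and body. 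The heart of the argument is a case analysis showing that the rules of $(P_\tau)^{M\setminus X}$ are, up to removal of literals over $X$ that are "decided" by $\tau$, exactly the rules of $P^M$ whose head is not already satisfied by $\tau^{-1}(1)$; for the rules that are dropped when forming $P_\tau$ one checks that either they are also dropped from $P^M$, or their head or body is already made true/false by $M$ in a way consistent with $M\setminus X$ being forced to satisfy them. One then verifies that a set $N \subseteq \at(P)\setminus X$ is a model of $(P_\tau)^{M\setminus X}$ if and only if $N \cup (M\cap X)$ is a model of $P^M$; applying this to $N = M\setminus X$ gives that $M\setminus X$ is a model, and the biconditional transports minimality of $M$ in $P^M$ to minimality of $M\setminus X$ in $(P_\tau)^{M\setminus X}$.

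The main obstacle is the bookkeeping in that correspondence, specifically handling the asymmetry between the head conditions. Clause~1 of Definition~\ref{def:truthassignment-reduct} removes a rule whose head meets $\tau^{-1}(1)$ or lies entirely in $X$: in the first case the head atom is in $M$, so the rule is trivially satisfied by any superset of $M\cap X$ and dropping it is harmless for modelhood; in the second case the head, after deleting $X$-literals, would be empty, and one must argue that $M$'s satisfaction of $r$ is then witnessed by the body (some positive body atom outside $M$, or some negative body atom in $M$), which is precisely what keeps $M\setminus X$ from being forced to do anything. Clauses~2 and~3 correspond cleanly to body literals being falsified, and are easy. I would also take care that the minimality direction genuinely needs the biconditional and not just one implication: a strictly smaller model $N' \subsetneq M\setminus X$ of $(P_\tau)^{M\setminus X}$ would yield $N' \cup (M\cap X) \subsetneq M$ as a model of $P^M$, contradicting minimality of $M$. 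This is standard "splitting"-style reasoning, and since the lemma is quoted from~\cite{FichteSzeider11} I would keep the write-up brief, citing that reference for the detailed verification.
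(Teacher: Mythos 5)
The paper does not prove this lemma at all---it is imported verbatim from Fichte and Szeider~\cite{FichteSzeider11}---so there is no in-paper argument to compare against; judged on its own, your proposal is the standard and correct argument: set $\tau:=\tau_M$ on $X\cap\at(P)$, show $M\setminus X$ is a minimal model of $(P_\tau)^{M\setminus X}$ by relating it to $P^M$, and you correctly isolate the only delicate case (clause~1 with $H(r)\subseteq X$, where satisfaction of $r$ by $M$ must come from the body). One small imprecision: the auxiliary biconditional as you state it, for \emph{arbitrary} $N\subseteq \at(P)\setminus X$, is false in the direction ``model of $(P_\tau)^{M\setminus X}$ implies model of $P^M$'' (a rule with $H(r)\subseteq X$, $H(r)\cap M=\emptyset$ and $B^+(r)\setminus M\neq\emptyset$ can be violated by $N\cup(M\cap X)$ when $N\not\subseteq M$); it holds exactly for $N\subseteq M\setminus X$, which is all your two applications ($N=M\setminus X$ for modelhood, $N'\subsetneq M\setminus X$ for minimality) require, so the restriction should be stated explicitly but nothing in the argument breaks.
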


Figure~\ref{fig:exploit-backdoors} illustrates how we can exploit backdoors to find answer sets of a program. Once we have found a strong $\CCC$-backdoor $X$, we can simplify the program $P$ to programs which belong to the target class $\mathcal{C}$. Then we consider all $\Card{\ta{X}}$ truth assignments to the atoms in the backdoor $X$. We compute the answer sets $\stableset(P_\tau)$ for all $\tau\in \ta{X}$. Finally, we obtain the answer set $\stableset(P)$ by checking for each $M\in \stableset(P_\tau)$ whether it gives rise to an answer-set of $P$.


\begin{figure}
	\begin{tikzpicture}[-latex,node distance=3em]
		\node(F)[align=center]{Find $\CCC$-backdoor\\ \(X \subseteq \at(P)\)};
		\node(P)[below of=F,node distance=7em]{\(P\)};
		\node(RUNTM1)[below of=F,node distance=14em]{?};
		\node(A)[align=center,right of=F,node distance=8em]{Apply\\ \(\tau_i: X \rightarrow\{0,1\}\) };
		\node(Pt1)[below of=A]{\(P_{\tau_1}\in \CCC\)};
		\node(Pt2)[below of=Pt1]{\(P_{\tau_2}\in \CCC\)};
		\node(Ptn)[below of=Pt2]{\(\cdots\)};
		\node(Pt2x)[below of=Ptn]{\(P_{\tau_{\Card{\ta{X}}}}\in \CCC\)};
		\path(P) edge node[above] {\(\tau_1\)} (Pt1);
		\path(P) edge node[above] {\(\tau_2\)} (Pt2);
		\path(P) edge node[above] {\ldots} (Ptn);
		\path(P) edge node[below] {\(\tau_{\Card{\ta{X}}}\)} (Pt2x);			
		\node(RUNTM2)[below of=A,node distance=14em]{\(\mathcal{O}(|x| \cdot {\Card{\ta{X}}})\)};
		\node(C)[align=center,right of=A,node distance=8em]{Determine answer\\ sets of candidates};
		\node(ASPt1)[below of=C]{\(\stableset(P_{\tau_1})\)};
		\node(ASPt2)[below of=ASPt1]{\(\stableset(P_{\tau_2})\)};
		\node(ASPtn)[below of=ASPt2]{\(\cdots\)};
		\node(ASPt2x)[below of=ASPtn]{\(\stableset(P_{\tau_{\Card{\ta{X}}}})\)};
		\path(Pt1) edge node[above] {} (ASPt1);
		\path(Pt2) edge node[above] {} (ASPt2);
		\path(Ptn) edge node[above] {} (ASPtn);
		\path(Pt2x) edge node[below] {} (ASPt2x);
		\node(RUNTM3)[below of=C,node distance=14em]{\(\mathcal{O}(|x|^c\cdot{\Card{\ta{X}}})\)};
		\node(Cand)[align=center,right of=C,node distance=9em]{Check\\ candidates};
		\node(AS)[below of=Cand, node distance=7em]{$\stableset(P,X)$};
		\path(ASPt1) edge node[above right] {\(\cup \tau^{-1}_1(1)\)} (AS);
		\path(ASPt2) edge node[above] {\ldots} (AS);
		\path(ASPtn) edge node[above] {\ldots} (AS);
		\path(ASPt2x) edge node[below right] {\(\cup \tau^{-1}_{\Card{\ta{X}}}(1)\)} (AS);
		\node(RUNTM4)[below of=Cand,node distance=14em]{\(\mathcal{O}(\sum_{i=1}^{\Card{\ta{X}}} AS(P_{\tau_i})) \)};
		\node(SOL)[right of=Cand,node distance=6em]{Solutions};
		\node(ASP)[below of=SOL,node distance=7em]{$\stableset(P)$};
		\path(AS) edge (ASP);
		\node(RUNTM5)[below of=SOL,node distance=14em]{};
	\end{tikzpicture}
	\caption{Exploit pattern of ASP backdoors if the target class \(\mathcal{C}\) is enumerable.}
	\label{fig:exploit-backdoors}
\end{figure}
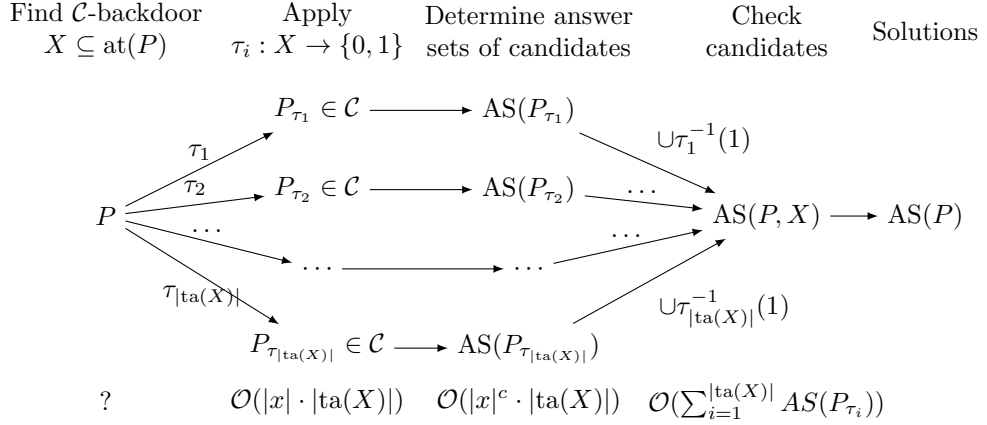
\begin{example}
	We consider the program of Example~\ref{ex:strong-bds}. The answer-sets of $P_\tau$ are $\stableset(P_{00})=\{\{a,d\}\}$, $\stableset(P_{01})=\{\{d\}\}$, $\stableset(P_{10})=\{\{a,d\}\}$, and  $\stableset(P_{11})=\{\{d\}\}$. Thus $\stableset(P,X)=\{ \{a,d\},\{c,d\},\{a,b,d\},\{b,c,d\}\}$, and since $\{c,d\}$ and $\{a,b,d\}$ are answer-sets of $P$, we obtain \(\stableset(P)=\{\{a,b,d\},\{c,d\}\}\). 
\end{example}

\begin{DEF}
  A class $\CCC$ of programs is \emph{enumerable} if for each $P\in
  \CCC$ we can compute $\stableset(P)$ in polynomial time.
\end{DEF}


%

\subsection{Deletion Backdoors}
For a program~$P$ and a set $X$ of atoms we define $P-X$ as the program obtained from $P$ by deleting all atoms contained in $X$ from the heads and bodies of all the rules of~$P$ and their negations. The definition gives rise to deletion backdoors and the problem of finding deletion backdoors, which is in some cases easier to solve than the problem of finding strong backdoors.

\begin{DEF}[Deletion $\CCC$\hy backdoor]
Let $\CCC$ be a class of programs. A set $X$ of atoms is a \emph{deletion $\CCC$\hy backdoor} of a program $P$ if $P-X\in \CCC$. 
We define the problem $k$-\textsc{Deletion $\CCC$-Backdoor Detection} as follows: given a program $P$, find a deletion $\CCC$\hy backdoor $X$ of $P$ of size at most $k$, or report that such $X$ does not exist. 
\end{DEF}

\subsection{Target Classes}
As explained above, we need to consider target classes of programs that only have a small number of answer sets. There are two causes for a program to have a large number of answer sets: (i)~disjunctions in the heads of rules, and (ii)~certain cyclic dependencies between rules. Disallowing both causes yields so-called \emph{stratified} programs~\cite{GelfondLifschitz88}. In the following we require normality and consider various types of acyclicity to describe target classes.
In order to define acyclicity we associate with each normal program $P$ its \emph{directed dependency graph} $D_P$ \cite{AptBlairWalker88}, 
and its \emph{undirected dependency graph} $U_P$ \cite{GottlobScarcelloSideri02}. $D_P$ has as vertices the atoms of $P$ and a directed edge $(x,y)$ between any two atoms $x,y$ for which there is a rule $r\in P$ with $x\in H(r)$ and $y\in B(r)$; if there is a rule $r\in P$ with $x\in H(r)$ and $y\in B^-(r)$, then the edge $(x,y)$ is called a \emph{negative edge}. $U_P$ is obtained from $D_p$ by replacing each negative edge $e=(x,y)$ with two undirected edges $\{x,v_e\},\{v_e,y\}$ where $v_e$ is a new \emph{negative vertex}, and by replacing each remaining directed edge $(u,v)$ with an undirected edge $\{u,v\}$. By an \emph{(un)directed cycle of  $P$} we mean an (un)directed cycle in $D_P$ ($U_P$). An (un)directed cycle is \emph{bad} if it contains a negative edge (a negative vertex), otherwise it is \emph{good}.
\newcommand{\DBC}[0]{\class{no\hy DBC}\xspace}
\newcommand{\BC}{\class{no\hy BC}\xspace}
\newcommand{\DC}{\class{no\hy DC}\xspace}
\newcommand{\C}{\class{no\hy C}\xspace}

In recent research, Fichte and Szeider~\cite{FichteSzeider11} have considered target classes that consist of normal programs without directed bad cycles (\DBC), without undirected bad cycles (\BC), without directed cycles (\DC), and without undirected cycles (\C). \DBC is exactly the class that contains all stratified programs~\cite{AptBlairWalker88}. Fichte and Szeider have examined the problems $k$-\textsc{Strong $\CCC$\hy Backdoor Detection} and $k$-\textsc{Deletion $\CCC$\hy Backdoor Detection} on the target classes $\CCC\in\{\C,\BC,\DC,\DBC\}$. 

\newcommand{\DBEC}{\class{no\hy DBEC}\xspace}
\newcommand{\BEC}{\class{no\hy BEC}\xspace}
\newcommand{\DEC}{\class{no\hy DEC}\xspace}
\newcommand{\EC}{\class{no\hy EC}\xspace}

\begin{example}
The set $X=\{a,b\}$ is a deletion $\DBEC$\hy backdoor of the program $P$ of Example~\ref{ex:strong-bds}, since the simplification
\(
P-X =\{ d;\, \leftarrow \pnot c;\, \leftarrow d, \pnot c\}
\)
is in the target class \DBEC. We observe easily that there exists
no deletion \DBEC-backdoor of size $1$.

\end{example}

\section{Parity Cycles}
In this section, we generalize the acyclicity based target classes by taking the parity of the number of negative edges (vertices) into account and consider backdoors for such classes. We say that an (un)directed cycle in a given program $P$ is \emph{even} if the cycle has an even number of negative edges (vertices). 
The definition gives rise to the new target classes of all \emph{normal} programs without directed bad even cycles (\DBEC), without undirected bad even cycles (\BEC), without directed even cycles \text{(\DEC)}, and without even cycles (\EC).

\begin{example}
	For instance in the program $P$ of Example~\ref{ex:strong-bds} the sequence $(a,b,c,a)$ is a directed bad even cycle, $(a,b,v_{(b,c)},c,v_{(c,a)},a)$ is an undirected bad even cycle, $(a,d,a)$ is a directed even cycle, and $(a,b,v_{(b,c)},c,v_{(c,a)},a)$ is an undirected even cycle (see Figure~\ref{fig:DpUp}). The set $X=\{c\}$ is a strong $\DBEC$\hy backdoor since the truth assignment reducts $P_{c=0}=P_0=\{ b \leftarrow a;\, d \leftarrow a;\, b;\, a \leftarrow d;\, a \leftarrow d, \pnot b;\, d \}$ and $P_{1}=\{ b \leftarrow a;\, d \leftarrow a;\, d \}$ are in the target class \DBEC. The answer-sets  of $P_\tau$ are $\stableset(P_{0})=\{\{a,b,d\}\}$ and  $\stableset(P_{1})=\{\{d\}\}$. Thus $\stableset(P,X)=\{ \{a,b,d\},\{c,d\}\}$, and since $\{a,b,d\}$ and $\{c,d\}$ are answer-sets of $P$, we obtain $\stableset(P)=\{\{a,b,d\},\{c,d\}\}$. 
\end{example}

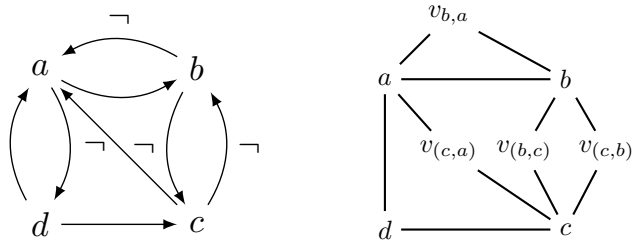
\begin{figure}[htb]
	\centering
	\subfloat{
	\scalebox{1.3}{
		\begin{tikzpicture}[-latex,node distance=16mm]
			\node(a){$a$};
			\node(b)[right of=a]{$b$};
			\node(c)[below of=b]{$c$};
			\node(d)[left of=c]{$d$};
			\path(a) edge[bend right] (b);
			\path(c) edge node[transparent,label=left:$\pnot$]{} (a); 
			\path(b) edge[bend right] node[transparent,label=left:$\pnot$]{} (c);
			\path(d) edge (c);
			\path(d) edge[bend left] (a);
			\path (a) edge[bend left] (d);
			\path (b) edge[bend right] node[transparent,label=above:$\pnot$]{} (a);
			\path (c) edge[bend right] node[transparent,label=right:$\pnot$]{} (b);
		\end{tikzpicture}}
		\label{fig:Dp}
	}\qquad\quad
	\subfloat{
		\begin{tikzpicture}[every edge/.style={draw,thick},-,node distance=12mm]
			\node(a){$a$};
			\node(b)[right of=a,xshift=+12mm]{$b$};
			\node(vca)[below right of=a,yshift=-1mm]{$v_{(c,a)}$};
			\node(vbc)[below left of=b,xshift=3mm,yshift=-1mm]{$v_{(b,c)}$};
			\node(vcb)[below right of=b,xshift=-3mm,yshift=-1mm]{$v_{(c,b)}$};
			\node(d)[below of=a,yshift=-8mm]{$d$};
			\node(c)[right of=d,xshift=12mm]{$c$};
			\node(vba)[above right of=a]{$v_{b,a}$};
			
			\path(a) edge (vba);
			\path(vba) edge (b);
			\path(a) edge (b);
			\path(c) edge (vca);
			\path(vca) edge (a); 
			\path(b) edge (vbc);
			\path(vbc) edge (c);
			\path(d) edge (c);
			\path(d) edge (a);
			\path(b) edge (vcb);
			\path(vcb) edge (c);
		\end{tikzpicture}
		\label{fig:Up}
	}
	\caption{Directed dependency graph $D_P$ (left) and undirected dependency graph $U_P$ (right) of the program $P$ of Example~\ref{ex:strong-bds}.}
	\label{fig:DpUp}
\end{figure}

\subsection{Computing Answer-Sets}
First, we discuss the connection between the problem of finding bad even cycles in signed graphs and even cycles in graphs. 
A \emph{signed (directed) graph} is a graph whose edges are either positive (unlabeled) or negative. 
We construct the \emph{unlabeled directed graph} $G'$ of a signed directed graph $G=(V,E)$ as follows: we replace in $G$ each positive edge $e=(u,v) \in E$ by two edges $(u,v_e),\,(v_e,v)$ where $v_e$ is a new vertex. Then we remove the labels from the negative edges. Analogously, we construct the \emph{unlabeled undirected graph} where we ignore the direction of the edges.   
The following connection was already observed by Aracena, Gajardo, and Montalva~\cite{MontalvaAracenaGajardo08}. 

\begin{LEM}[\cite{MontalvaAracenaGajardo08}]\label{lem:badeven2even}
	A signed (un)directed graph $G$ has an even cycle if and only if its unlabeled (un)directed graph $G'$ has a cycle of even length.
\end{LEM}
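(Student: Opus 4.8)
The statement is an ``if and only if'' about even cycles, so the plan is to give two directions, and in each direction to exhibit an explicit correspondence between cycles in the signed graph $G$ and cycles in the unlabeled graph $G'$, tracking both the parity of the number of negative edges (in $G$) and the parity of the length (in $G'$). The crucial bookkeeping observation is the way $G'$ is built: every \emph{positive} edge $e=(u,v)$ of $G$ is subdivided into a path $u,v_e,v$ of length $2$, whereas every \emph{negative} edge is kept as a single edge (just unlabeled). So a closed walk in $G$ that uses $p$ positive edges and $q$ negative edges corresponds to a closed walk in $G'$ of length $2p+q$, and conversely. Since $2p+q\equiv q \pmod 2$, the length of the corresponding cycle in $G'$ is even exactly when $q$, the number of negative edges traversed in $G$, is even — which is precisely the definition of an even cycle given just before the lemma.

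For the forward direction I would take an even cycle $C$ of $G$, say with vertex-edge sequence alternating atoms and (positive/negative) edges, and replace each positive edge along $C$ by its two-edge subdivision through the fresh vertex $v_e$, leaving each negative edge in place. This yields a closed walk $C'$ in $G'$; I must check it is actually a cycle, i.e., no repeated vertices. The original vertices of $C$ are distinct because $C$ is a cycle, and each subdivision vertex $v_e$ is brand new and associated with a unique edge $e$, so it cannot coincide with an original vertex or with another subdivision vertex — hence $C'$ is a genuine cycle. By the parity count above, $|C'| = 2p+q$ where $q$ is the number of negative edges of $C$; since $C$ is even, $q$ is even, so $|C'|$ is even. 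For the converse, given an even-length cycle $C'$ in $G'$, note that whenever $C'$ enters a subdivision vertex $v_e$ it must also leave it, and $v_e$ has degree exactly $2$ with both neighbours being the endpoints $u,v$ of the original positive edge $e$; so $C'$ passes through $v_e$ only via the segment $u,v_e,v$ (or its reverse). Contracting every such length-$2$ segment back to the single positive edge $e$, and keeping the negative edges, produces a closed walk $C$ in $G$; again one checks it visits each original vertex at most once, hence is a cycle, and that its number of negative edges $q$ satisfies $|C'|=2p+q$, so $q$ is even and $C$ is an even cycle. In the directed case the same argument goes through verbatim, since subdivision preserves the orientation $(u,v_e),(v_e,v)$ and the contraction recovers $(u,v)$; in the undirected case one simply ignores orientations throughout.

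The only slightly delicate point — and the one I would write out most carefully — is the claim that the walks produced in each direction are \emph{simple cycles} rather than merely closed walks; everything else is the elementary parity identity $2p+q\equiv q\pmod 2$. The degree-$2$ structure of the subdivision vertices $v_e$ in $G'$ is exactly what makes the contraction well-defined and injective on cycles, so I would isolate that as the key sub-observation. I would also remark that this correspondence is essentially a bijection between even cycles of $G$ and even-length cycles of $G'$, which is more than the lemma asks for but makes the proof transparent; and I would note explicitly that the ``bad even cycle'' terminology of the paper is captured here because a bad cycle is one containing at least one negative edge, and an even bad cycle has an even \emph{nonzero} number of negative edges, matching $q$ even with $q\ge 2$ — though for the purposes of Lemma~\ref{lem:badeven2even} as stated only the parity of $q$ matters.
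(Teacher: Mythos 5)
Your proof is correct and follows essentially the same route as the paper's: positive edges of $G$ correspond to length-$2$ subdivided paths in $G'$ and negative edges to single edges, so a cycle with $p$ positive and $q$ negative edges corresponds to a cycle of length $2p+q\equiv q\pmod 2$. You are in fact more careful than the paper's terse argument, since you explicitly verify (via the degree-$2$ subdivision vertices) that the correspondence sends simple cycles to simple cycles in both directions.
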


\begin{proof}
Let $G=(V,E)$ be the signed directed graph and $G'=(V',E')$ its unlabeled directed graph. Since every positive edge $e \in E$ corresponds to two edges $e_1,e_2 \in E'$ 
and every negative edge $e \in E$ corresponds to one edge $e \in E'$, a cycle in $G$ with an even number of negative edges gives a cycle of even length in $G'$.
%
Conversely, let $G'=(V',E')$ be an unlabeled directed graph that contains a cycle of even length. Then $G$ contains an even cycle since every two edges $e_1,e_2 \in E'$ correspond either to two negative edges or no negative edge. 
The proof works analogously for undirected graphs. 

\end{proof}

\newcommand{\WFM}{\ensuremath{\text{WFM}}}

The well-founded reduct of a program $P$ under an interpretation $\tau$ is the logic program $P^{\text{WF}}_\tau$ we obtain by
removing all rules $r \in P$ where some $\at(r) \in \tau^{-1}(0)$, and
removing from all rules $r \in P$ all literals $x \in \at(r)$ where $x \in \tau^{-1}(1)$.
We obtain the program $P^+$ ($P^-$) by removing all rules from $P$ where $B^+(r)\neq \emptyset$ ($B^-(r)\neq \emptyset$ respectively). The \emph{well-founded model} $\WFM(P)$ of a program $P$ is the least fixed point of the sequence of interpretations where $\tau_0:=\emptyset$, $\tau_{k+1}^{-1}(1)$ consists of the least model of $P^+(P^\text{WF}_\tau)$ and $\tau_{k+1}^{-1}(0)$ consists of the atoms of $P^+(P^\text{WF}_\tau)$ that are not in the least model.

\begin{LEM}\label{lem:enumerable}
The target classes $\DBEC, \BEC,\DEC, \EC$ are enumerable. 
\end{LEM}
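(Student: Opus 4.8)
The plan is to show that for each target class $\CCC\in\{\DBEC,\BEC,\DEC,\EC\}$ and each program $P\in\CCC$, the answer-set collection $\stableset(P)$ can be computed in polynomial time. By the definition of enumerability, this is exactly what must be established, so the whole lemma reduces to giving, for an arbitrary member $P$ of one of these classes, a polynomial-time procedure that lists all of $\stableset(P)$. Since all four classes consist of \emph{normal} programs, I can work throughout with normal programs, which simplifies the reduct structure considerably.

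\smallskip
\noindent\textbf{First} I would reduce the four classes to a single structural property. The crucial observation is that membership in any of these classes forbids directed bad \emph{even} cycles in $D_P$ (for $\DBEC$, and a fortiori for $\DEC$, which forbids all directed even cycles), and correspondingly forbids the undirected variants for $\BEC$ and $\EC$. I would argue that absence of bad even cycles is what controls the number of answer-sets: intuitively, the even cycles through negative edges are precisely the structures that allow a program to branch into multiple distinct minimal models of its GL-reduct. The technical content is to show that a normal program whose dependency graph has no (directed/undirected) bad even cycle has only polynomially many answer-sets, and that they can be enumerated efficiently. Here I would lean on the well-founded model machinery just introduced in the excerpt: compute $\WFM(P)$ in polynomial time, and observe that every answer-set must extend the positive part of the well-founded model and avoid its negative part, so it suffices to enumerate answer-sets of the residual program on the atoms left undecided by $\WFM(P)$.

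\smallskip
\noindent\textbf{Next}, to bound and enumerate the answer-sets of the residual program, I would invoke Lemma~\ref{lem:badeven2even}: it translates the ``bad even cycle'' condition on the signed dependency graph into the ordinary ``even cycle'' condition on the unlabeled graph $G'$. This lets me import graph-theoretic structure about even-cycle-free (di)graphs. For the undirected classes $\EC$ and $\BEC$, forbidding even cycles in $U_P$ forces each biconnected component to be extremely restricted (essentially an odd cycle or an edge), which tightly limits how atoms can mutually depend and hence caps the number of minimal models of the reduct by a polynomial. For the directed classes I would use the corresponding structure of even-cycle-free digraphs. In each case the argument yields both a polynomial bound on $\Card{\stableset(P)}$ and a way to produce the answer-sets by a bounded search guided by the cycle structure, with each candidate checked against the minimal-model condition on $P^M$ in polynomial time.

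\smallskip
\noindent\textbf{The hard part} will be the quantitative step: turning ``no bad even cycle'' into an explicit polynomial bound on the number of answer-sets together with a matching enumeration algorithm, uniformly across the directed and undirected cases. The directed and undirected settings behave differently enough that I expect to handle them separately, and the directed even-cycle-free case is the more delicate one, since even-cycle-free digraphs are structurally richer than their undirected counterparts. I would treat $\EC$ and $\BEC$ first via the biconnected-component analysis, then the $\DEC$ and $\DBEC$ cases, reusing the well-founded reduct to strip away the forced atoms before the cycle-structure argument is applied. The remaining verifications — that $\WFM$ is computable in polynomial time, that the translation of Lemma~\ref{lem:badeven2even} preserves the relevant cycle parities, and that minimal-model checking on each candidate is polynomial — are routine and I would state them without grinding through the details.
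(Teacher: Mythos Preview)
Your plan overcomplicates the argument and misses the one fact that makes it short. The paper's proof rests on a cited result of Zhao: a normal program with no directed bad even cycle has \emph{at most one} answer-set, and when it exists it coincides with the well-founded model. So for $P\in\DBEC$ one simply computes $\WFM(P)$ in polynomial time and checks whether it is an answer-set; there is no enumeration problem and no need for any structural graph analysis. The remaining three classes are dispatched by a one-line containment: every directed bad even cycle of $D_P$ is in particular a directed even cycle of $D_P$ and also induces a bad even cycle (hence an even cycle) in $U_P$, so $\EC,\DEC,\BEC\subseteq\DBEC$, and enumerability is inherited.

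By contrast, you aim for ``polynomially many answer-sets'' and sketch a case split with biconnected-component analysis for the undirected classes and a separate, unspecified structural argument for even-cycle-free digraphs. The step you label ``the hard part'' --- turning the absence of bad even cycles into a polynomial bound plus an enumeration procedure --- is precisely what Zhao's theorem collapses to the bound~$1$. Without that theorem (or a reproof of it), your sketch leaves the central claim unproved: you never establish the polynomial bound, only assert that the cycle structure should yield one. And the separate treatment of the four classes is unnecessary once the containments above are noted; Lemma~\ref{lem:badeven2even} is not needed here at all.
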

\begin{proof}
	
	Zhao~\cite{Zhao02} has shown that a program without a bad even cycle has either no answer-set or the well-founded model is its answer-set. Since in the definition of the well-founded model the sequence of $\tau_0,\tau_1,\ldots$ is monotone for a normal program, there is a least fixed point and it can be computed in polynomial time~\cite{Gelder89,GelderRossSchlipf91}. Thus the answer sets can be computed in polynomial time. Let $P$ be a program and $D_P$ ($U_P$) its (un)directed dependency graph. Since every bad even cycle in $D_P$ is also a bad even cycle in $U_P$, this holds for the undirected case. Considering the fact that every bad even cycle in $D_P$ is also an even cycle in $D_P$, the lemma sustains for the target class $\DEC$. Since every bad even cycle in $D_P$ is also an even cycle $U_P$, it prevails for the remaining target class $\EC$.

\end{proof}

\begin{PRO}
\label{cor:evaluation}
The problems  \textsc{Consistency}, \textsc{Credulous} and \textsc{Skeptical Reasoning}, \textsc{AS Counting} and \textsc{AS Enumeration} are all polynomial-time solvable for programs with strong $\CCC$\hy backdoor of bounded size,  $\CCC \in \{\DBEC,\\ \BEC,\DEC, \EC\}$, assuming that the backdoor is given as an input.
\end{PRO}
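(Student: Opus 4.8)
The plan is to show that, given a strong $\CCC$\hy backdoor $X$ of $P$ with $|X| = k$ bounded, we can solve each of the listed problems in polynomial time by following the ``exploit pattern'' of Figure~\ref{fig:exploit-backdoors}. The backbone is that $\CCC$ is enumerable (Lemma~\ref{lem:enumerable}), so for every $\tau \in \ta{X \cap \at(P)}$ the truth assignment reduct $P_\tau$ lies in $\CCC$ and we can compute $\stableset(P_\tau)$ in polynomial time; moreover, since a program in $\CCC$ has either no answer-set or exactly one (its well-founded model, by Zhao's result), each $\stableset(P_\tau)$ has at most one element. There are at most $2^k$ assignments $\tau$, a constant since $k$ is bounded, so the total number of candidate sets in $\stableset(P,X)$ is at most $2^k$.

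First I would recall Lemma~\ref{lem:subset}: $\stableset(P) \subseteq \stableset(P,X)$. The key additional fact I need is that, given a candidate $M \cup \tau^{-1}(1) \in \stableset(P,X)$, one can check in polynomial time whether it is actually an answer-set of $P$ --- this is just verifying that $M \cup \tau^{-1}(1)$ is a minimal model of the GL reduct $P^{M \cup \tau^{-1}(1)}$, and minimality of a model of a (ground) program can be decided in polynomial time because the reduct is a positive disjunctive program and we only need to check that no proper subset is a model, which reduces to checking that the candidate is the unique minimal model containing exactly the right atoms; more simply, since $P^{M}$ is negation-free, $M$ is a minimal model iff $M$ is a model and $M$ equals the least model of $P^{M}$ when $P^{M}$ is Horn, and in the disjunctive case minimality checking is coNP in general but here it suffices to note that the candidates come from answer-sets of reducts in $\CCC$, which are normal, and the check reduces to polynomially many least-model computations. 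I would state this as the standard model-checking fact and cite it. Once the true answer-sets are identified, \textsc{Consistency} is answered by asking whether $\stableset(P) \neq \emptyset$; \textsc{AS Counting} is answered by returning $|\stableset(P)|$; \textsc{AS Enumeration} by listing the (at most $2^k$) sets; and \textsc{Credulous} (resp.\ \textsc{Skeptical}) \textsc{Reasoning} for an atom $a$ by checking whether $a$ belongs to some (resp.\ all) of them.

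The running time breaks down as in the figure: $\bigoh(2^k \cdot n^c)$ for computing all the reducts and their answer-sets (enumerability gives the $n^c$ factor per reduct), plus $\bigoh(2^k \cdot n^{c'})$ for the candidate checks, plus $\bigoh(2^k)$ for the final bookkeeping. Since $k$ is bounded this is polynomial in the size of $P$. I would be careful to note that the backdoor is given as part of the input, so no detection step is needed --- this is exactly where the hardness results of later sections do not interfere.

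The main obstacle is the candidate-checking step: naively, deciding whether a set is an answer-set of a disjunctive program involves a coNP-hard minimality test. I expect the resolution to be that it suffices to test, for each candidate $M$, whether $M$ is a \emph{minimal} model of $P^M$, and that although this is coNP-complete for arbitrary disjunctive programs, the number of candidates is only $2^k$; one either invokes the bounded backdoor to argue the test is cheap on the relevant instances, or --- more cleanly --- one restricts attention to the fact that we are only checking membership in $\stableset(P,X)$ versus genuine answer-sethood, and since $|\stableset(P,X)| \le 2^k$ the overall procedure is still polynomial even if each individual minimality check costs time depending only on $k$ and polynomially on $n$. I would present this carefully, since it is the one place where the disjunctive (rather than normal) nature of $P$ itself could cause trouble.
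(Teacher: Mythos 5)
Your proposal takes essentially the same route as the paper: the paper's own proof is exactly the three observations that $\CCC$ is enumerable (Lemma~\ref{lem:enumerable}), that $\Card{\stableset(P,X)}\leq 2^{\Card{X}}$, and that each listed problem then reduces to at most $2^{\Card{X}}$ polynomial checks of the candidates. The one point where you sense trouble---that verifying whether a candidate $M\in\stableset(P,X)$ is a genuine answer-set of the \emph{disjunctive} program $P$ involves a minimality test that is \coNP{}-hard in general---is a legitimate concern, but the paper's proof does not address it either; it is inherited from the Fichte--Szeider framework, where the resolution is to perform the minimality check itself via the backdoor (searching for a proper submodel of $P^M$ among the reducts $(P^M)_\tau$, $\tau\in\ta{X}$), not via your second fallback, which is circular since it simply assumes each check costs only $f(k)\cdot n^{c}$.
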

\begin{proof}
	
	Let $X$ be the given backdoor. By Lemma~\ref{lem:enumerable} each target class $\CCC$ is enumerable. Since we have $\Card{\stableset(P,X)}\leq 2^{\Card{X}}$, we can solve each listed problem by making at most $2^{\Card{X}}$ polynomial checks.
	
\end{proof}
If the problem of determining backdoors is also polynomial-time solvable with respect to the fixed size of a smallest strong $\CCC$\hy backdoor, then the ASP problems are polynomial-time solvable.

\begin{LEM}
	
	For all target classes $\CCC\in \{\DBEC, \BEC,\DEC, \EC\}$ every deletion $\CCC$\hy backdoor is also a strong $\CCC$\hy backdoor.
	
\end{LEM}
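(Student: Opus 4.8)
The plan is to show that adding negated literals back into a rule (which is exactly the difference between $P - X$ and $P_\tau$) can only remove cycles from the dependency graph, never create them, so that membership in any of the ``no-cycle-of-a-given-kind'' classes is preserved. More precisely, fix a program $P$, a set $X$ of atoms, and a truth assignment $\tau \in \ta{X}$. I would first observe that $P_\tau$ is obtained from $P - X$ by (i) deleting some rules entirely (those killed by steps 1--3 of Definition~\ref{def:truthassignment-reduct}, plus rules whose head lies in $X$), and (ii) for the surviving rules, keeping the same heads and bodies as in $P-X$ --- that is, with all occurrences of atoms of $X$ already stripped. The key point is that every rule of $P_\tau$ is, up to the deletion of literals over $X$, a rule of $P - X$, and no rule of $P_\tau$ has a head, positive body, or negative body atom outside $X$ that was not already present in the corresponding rule of $P-X$.

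From this, the edge sets satisfy $D_{P_\tau} \subseteq D_{P-X}$ and $U_{P_\tau} \subseteq U_{P-X}$, and moreover the sign of each edge is preserved: a negative edge of $D_{P_\tau}$ comes from an atom in some $B^-(r)$, and that same atom sits in $B^-(r)$ for the corresponding rule of $P-X$, hence gives a negative edge there too (and similarly negative vertices carry over to $U_{P-X}$). Both graphs may also acquire isolated vertices or lose some, but that is irrelevant for cycles. Consequently any directed (undirected) cycle of $P_\tau$, bad/even or otherwise, is a directed (undirected) cycle of $P-X$ with the same number of negative edges (vertices). Therefore, if $P - X$ contains no directed bad even cycle, then neither does $P_\tau$; and the identical argument works verbatim for \BEC, \DEC, and \EC. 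Since $P-X \in \CCC$ was the hypothesis (and normality is likewise inherited, as $P_\tau$ only deletes rules and literals from the normal program $P-X$), we conclude $P_\tau \in \CCC$ for every $\tau \in \ta{X}$, i.e.\ $X$ is a strong $\CCC$\hy backdoor.

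The only mildly delicate step is verifying claim (i)--(ii) above: one must check, case by case against Definition~\ref{def:truthassignment-reduct}, that the surviving rules of $P_\tau$ really are sub-rules of the corresponding rules of $P - X$ and introduce no new non-$X$ literals in head, positive body, or negative body. This is a routine but necessary bookkeeping check; once it is in place, the cycle-containment and sign-preservation observations are immediate, and the extension from \DBEC{} to the other three classes is free because ``bad even'' directed cycles are a subfamily of both ``bad even'' undirected cycles and ``even'' directed cycles, and ``even'' undirected cycles in turn contain all of these, exactly as in the proof of Lemma~\ref{lem:enumerable}. I expect no real obstacle; the statement is essentially the observation that deletion reducts are ``more cyclic'' than truth-assignment reducts.
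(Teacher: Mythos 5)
Your proof follows essentially the same route as the paper's: the whole content is the observation that every rule of $P_\tau$ is, after stripping the $X$-literals, also a rule of $P-X$, i.e.\ $P_\tau \subseteq P-X$, and the paper's proof consists of exactly this case analysis over Definition~\ref{def:truthassignment-reduct}. You are in fact more explicit than the paper about the second half of the argument, namely that rule containment yields containment of the signed dependency graphs and hence preservation of the ``no bad even cycle'' properties, which the paper leaves entirely implicit.

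One step deserves more care than either you or the paper give it, and it is precisely the step that matters for \emph{parity} classes. From ``every negative edge of $D_{P_\tau}$ is still negative in $D_{P-X}$'' you conclude that every cycle of $D_{P_\tau}$ survives in $D_{P-X}$ \emph{with the same number of negative edges}; this also needs the converse direction, that positive edges stay positive. If the dependency graph is read literally as a simple graph in which the edge $(x,y)$ is negative as soon as \emph{some} rule witnesses $y\in B^-(r)$, the converse can fail: a rule of $(P-X)\setminus P_\tau$ may turn an edge that is positive in $D_{P_\tau}$ into a negative one in $D_{P-X}$ and thereby flip the parity of a cycle. For instance, take $P=\{a\leftarrow \pnot b;\; b\leftarrow \pnot c;\; c\leftarrow a;\; c\leftarrow \pnot a,\pnot x\}$ and $X=\{x\}$: under that reading $D_{P-X}$ has a single cycle with three negative edges (so $P-X\in\DBEC$), while $D_{P_{\tau}}$ for $\tau(x)=1$ has the same cycle with two negative edges, a bad even cycle. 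The argument goes through cleanly if one signed edge is kept per rule occurrence, so that $D_{P_\tau}$ is literally a signed subgraph of $D_{P-X}$ (and likewise for $U_{P_\tau}$, where otherwise a positive edge of $U_{P_\tau}$ may not even exist in $U_{P-X}$ because it has been subdivided by a negative vertex). The paper's proof tacitly relies on the same convention, so your proposal is on par with, and no weaker than, the published argument; but you should make this convention explicit before asserting ``same number of negative edges (vertices)''.
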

\begin{proof}
	We show the statement by proving that $P_\tau \subseteq P-X$ for every $\tau\in \ta{X}$ and for every program $P\in \CCC$. Let $P$ be a program and $X$ a set of atoms of $P$. We choose arbitrarily a truth assignment $\tau \in \ta{X}$. For a rule $r\in P$ if $H(r)\cap \tau^{-1}(1)\neq \emptyset$ or $H(r)\subseteq X$ or $B^+(r) \cap \tau^{-1}(0)\neq \emptyset$ or $B^-(r) \cap \tau^{-1}(1)\neq \emptyset$, then $r$ is removed from $P$ by the truth assignment reduct of $P$ under $\tau$. 
	However removing all literals $x, \pnot x$ with $x \in X$ from the head $H(r)$ and the body $B(r)$ yields a new rule $r' \in P-X$. Thus $r' \notin P_\tau$ and $r' \in P-X$ where $r' \subseteq r$. If the conditions (1), (2), and (3) of Definition~\ref{def:truthassignment-reduct} above do not apply, then all literals $v,\pnot v$ with $v\in X$ are removed from the heads $H(r)$ and bodies $B(r)$ by the truth assignment reduct of $P$ under $\tau$. This is also done by $P-X$.  Hence $P_\tau \subseteq P-X$.
	
\end{proof}

\subsection{Backdoor Detection for Directed Target Classes}
In order to apply backdoors we need to find them first. In this section we consider the problems $k$-\textsc{Strong $\CCC$\hy Backdoor Detection} and $k$-\textsc{Deletion $\CCC$\hy Backdoor Detection} for the target classes $\CCC\in \{\DEC,\DBEC\}$.

For an unlabeled directed graph $G=(V,E)$ and fixed vertices $s,m,t\in V$ we define the \emph{program $P_{s,m,t}(G)$} as follows: For each edge $e=(v,w) \in E$ where $v,w \in V$ and $w\neq m$ we construct a rule $r_e$:
$v \leftarrow w$. For the edges $e'=(v',m)$ where $v'\in V$ we construct a rule $r_{e'}$: $v' \leftarrow \pnot m$. Then we add the rule $r_{s,t}$: $t \leftarrow \pnot s$.

\begin{LEM}\label{lem:paththrough}
Let $G = (V, E)$ be a directed graph and $s, m, t$ three distinct vertices of $G$. Then $G$ has a simple path from $s$ to $t$ via $m$ if and only if $P_{s,m,t}(G)\notin \DBEC$.
\end{LEM}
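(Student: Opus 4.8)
The plan is to translate the combinatorial statement ``$G$ has a simple $s$--$t$ path through $m$'' into the programmatic statement ``$P_{s,m,t}(G)$ has a bad even cycle'' by analysing what the directed dependency graph $D_{P_{s,m,t}(G)}$ looks like. First I would unwind the construction: a rule $v \leftarrow w$ contributes a positive edge $(v,w)$ to $D_P$; a rule $v' \leftarrow \pnot m$ contributes a negative edge $(v',m)$; and the extra rule $t\leftarrow \pnot s$ contributes a negative edge $(t,s)$. So $D_{P_{s,m,t}(G)}$ is, up to reversing orientations, essentially $G$ with the incoming edges of $m$ made negative, plus one extra negative edge from $t$ to $s$. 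A cycle in this graph is \emph{bad even} iff it uses an even and positive number of negative edges. Since the only negative edges are (i) the edges entering $m$ and (ii) the edge $(t,s)$, a bad even cycle has essentially two shapes: it uses exactly two negative edges (e.g. one entering $m$ and the edge $(t,s)$), or it is a longer configuration — but since at most one negative edge enters $m$ can appear on a \emph{simple} cycle and the edge $(t,s)$ is unique, ``bad even'' here forces exactly the two negative edges $(t,s)$ and some $(v',m)$.

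Next I would prove the two directions. For the forward direction, suppose $G$ has a simple path $s = u_0, u_1,\dots, u_j = m, \dots, u_\ell = t$ through $m$. Split it at $m$ into an $s\to m$ segment and an $m\to t$ segment. In $D_P$ the $m\to t$ segment is a directed path (all its edges avoid $m$ as target, hence are positive edges of $D_P$ in the same direction), and the $s\to m$ segment is a directed path whose last edge $(u_{j-1},m)$ is the negative edge $(u_{j-1},m)$. Appending the negative edge $(t,s)$ closes these two segments into a simple cycle in $D_P$ with exactly two negative edges, hence a bad even cycle, so $P_{s,m,t}(G)\notin\DBEC$. For the converse, suppose $D_P$ has a bad even cycle $C$. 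As argued, $C$ must contain an even, positive number of negative edges; since the edge $(t,s)$ is the only negative edge not entering $m$, and a simple cycle contains at most one edge entering $m$, $C$ contains exactly the two negative edges $(t,s)$ and $(v',m)$ for some $v'$. Deleting these two edges from $C$ leaves two internally disjoint directed paths, one from $s$ to $v'$ and one from $m$ to $t$, all of whose edges are positive edges of $D_P$, hence edges of $G$; adding the edge $(v',m)$ of $G$ and concatenating gives a simple path $s \to v' \to m \to t$ in $G$, i.e. a simple $s$--$t$ path via $m$.

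The main obstacle — and the point that needs careful bookkeeping rather than a deep idea — is the exact correspondence between simple cycles of $D_P$ and simple paths of $G$, in particular ensuring that a \emph{bad even} cycle cannot cheat by reusing the edge $(t,s)$ or by weaving through $m$ in some degenerate way. One must check: (a) a simple cycle uses $(t,s)$ at most once and uses at most one negative edge of the form $(v',m)$, so ``even and $\ge 1$ negative edges'' is equivalent to ``exactly these two'', ruling out $0$ or $4,6,\dots$; (b) the two residual paths obtained after deleting the two negative edges are genuinely simple and internally vertex-disjoint (inherited from simplicity of $C$), so their concatenation through the single vertex $m$ is a simple path in $G$; and (c) the boundary cases where $v'=s$, or where the $s\to v'$ segment or $m\to t$ segment is a single vertex, still yield a valid simple path (here one uses that $s,m,t$ are pairwise distinct). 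Once these routine verifications are in place, Lemma~\ref{lem:badeven2even} is not even needed — the bad-even condition is handled directly via the two-negative-edge analysis — and the equivalence follows.
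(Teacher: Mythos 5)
Your proposal is correct and takes essentially the same route as the paper: both arguments identify the negative edges of $D_{P_{s,m,t}(G)}$ as precisely $(t,s)$ together with the edges entering $m$, and then match simple $s$--$t$ paths through $m$ with simple directed cycles containing exactly two negative edges. Your write-up is in fact somewhat more detailed than the paper's terse converse direction (notably the parity count forcing exactly the two negative edges $(t,s)$ and $(v',m)$); the only slip is the hedge ``up to reversing orientations,'' which is unnecessary because the rule $v\leftarrow w$ yields the edge $(v,w)$, so $D_{P_{s,m,t}(G)}$ inherits the orientation of $G$ unchanged --- and your argument already uses the correct orientation throughout.
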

\begin{proof}

Let $G$ be a graph and and $p=(s, s_1, \ldots, s_k, m,t_1, \ldots,t_l,t)$ a path in $G$ where $s\neq m, m \neq t, s \neq t$. The construction $P_{s,m,t}$ gives rules $\{s \leftarrow s_1;\, s_1 \leftarrow s_2;\, \ldots s_k\leftarrow \neg m;\, m \leftarrow t_1;\, t_1 \leftarrow t_2;\, \ldots t \leftarrow t;\, t \leftarrow \neg s\} \in P_{s,m,t}(G)$. Since $D_P$ contains the cycle $c=(s,s_1,\ldots,s_k,m,t_1,\ldots,t,s)$ and $c$ contains an even number of negative edges, the program $P_{s,m,t}(G) \notin \DBEC$. 

Conversely, let $P_{s,m,t}(G) \in \DBEC$, then $P_{s,m,t}(G)$ contains a bad even cycle $c$. Since the construction of $P_{s,m,t}(G)$ gives only negative edges $(t,s)\in D_{P_{s,m,t}(G)}$ and $(v,m) \in D_{P_{s,m,t}(G)}$ where $v \in \at(P_{s,m,t}(G))$, the cycle $c$ must have the vertices $s$,$m$, and $t$. Further every rule $r_e \in P_{s,m,t}(G)$ corresponds to an edge $e \in E$. It follows that there is a simple path $s,\ldots,m,\ldots,t$.

%
\end{proof}

\begin{THE}\label{the:paranp}
The problems $k$-\textsc{Strong} \DBEC-\textsc{Backdoor Detection} and $k$-\textsc{Deletion} \DBEC-\textsc{Backdoor Detection} are \coNP{}-hard for every constant $k\geq 0$.
\end{THE}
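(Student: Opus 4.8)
The plan is to reduce from a known \coNP{}-hard (equivalently, \NP{}-hard complement) problem about paths in directed graphs, using Lemma~\ref{lem:paththrough} as the engine. The natural candidate is the problem of deciding whether a directed graph $G$ has a simple path from $s$ to $t$ that passes through a prescribed vertex $m$ — this ``path through a specified vertex'' problem is \NP{}-complete. By Lemma~\ref{lem:paththrough}, $G$ has such a path if and only if $P_{s,m,t}(G)\notin\DBEC$, i.e.\ if and only if $P_{s,m,t}(G)$ has \emph{no} strong (and no deletion) \DBEC{}-backdoor of size $0$ (since for $X=\emptyset$ the only reduct is $P$ itself, and $P-\emptyset = P$). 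Thus the empty set is a backdoor of $P_{s,m,t}(G)$ iff $G$ has no simple $s$–$t$ path through $m$, which already gives \coNP{}-hardness for $k=0$.

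To get hardness for every constant $k\ge 1$, I would pad the instance: given $G$, form $G^{\star}$ by taking the disjoint union of $P_{s,m,t}(G)$ with $k$ independent ``gadget'' copies, each being a tiny program that forces its own single atom to be deleted/assigned. Concretely, for each $i\in\{1,\dots,k\}$ add fresh atoms and a minimal configuration of rules on a new atom $g_i$ whose removal is necessary and sufficient to put that component into \DBEC{} — for instance a self-dependency through a negative edge that creates a bad even cycle unless $g_i$ is in the backdoor. Because these gadgets are vertex-disjoint from the $P_{s,m,t}(G)$ part and from each other, any backdoor of size $\le k$ must spend exactly one atom on each gadget (so $\{g_1,\dots,g_k\}$ is forced) and hence has \emph{nothing} left over for the $P_{s,m,t}(G)$ component; so a backdoor of size $\le k$ exists iff $P_{s,m,t}(G)\in\DBEC$ already, i.e.\ iff $G$ has no simple $s$–$t$ path through $m$. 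This works uniformly for the strong and the deletion variant, since on each component the two notions coincide here (the gadgets have no negated head atoms and the relevant reducts behave identically, and on $P_{s,m,t}(G)$ with empty assignment there is nothing to choose).

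The verification steps, in order: (i) recall/state that the specified-vertex simple-path problem is \NP{}-complete, hence its complement is \coNP{}-hard; (ii) invoke Lemma~\ref{lem:paththrough} to transfer this to membership of $P_{s,m,t}(G)$ in \DBEC{}; (iii) design the $k$ padding gadgets and check the two properties — each gadget alone is \emph{not} in \DBEC{} (it contains a bad even cycle), but deleting/assigning its distinguished atom \emph{does} put it in \DBEC{}, and no single atom deletion anywhere else fixes it; (iv) argue the disjoint-union program $G^{\star}$ lies in \DBEC{} iff all its components do, so a size-$\le k$ backdoor exists iff $P_{s,m,t}(G)\in\DBEC$; (v) observe the whole construction is polynomial-time computable. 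I expect the main obstacle to be step~(iii): making the gadgets simultaneously (a) robust, so that exactly one prescribed atom per gadget is forced into any small backdoor and the \DBEC{} status of one component cannot be ``rescued'' by spending budget elsewhere, and (b) uniform across the strong-backdoor and deletion-backdoor variants, so a single reduction proves both statements of Theorem~\ref{the:paranp} at once. One must be a little careful that the truth-assignment reduct can delete an atom \emph{both} by assigning it and by removing it from heads/bodies, so the gadget should be set up so that every assignment to $g_i$ (not merely deletion) destroys the bad even cycle, which is why a negative self-loop–type construction on $g_i$ is the right shape.
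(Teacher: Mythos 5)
Your proposal matches the paper's proof essentially exactly: it reduces from the NP-complete problem of deciding whether $G$ has a simple $s$--$t$ path through $m$ (Lapaugh--Papadimitriou) via Lemma~\ref{lem:paththrough}, and for $k\ge 1$ pads the instance with $k$ disjoint bad even cycles that exhaust the backdoor budget, which is precisely the paper's construction $G_k$. The only nit is that a single negative self-loop yields a bad \emph{odd} cycle (one negative edge), so each padding gadget needs an even number of negative edges, e.g., the two rules $g_i\leftarrow \neg h_i$ and $h_i\leftarrow \neg g_i$; with that fix your argument is the paper's.
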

\begin{proof}
Let $k\geq 0$. Let $G$ be a given directed graph and $t,m,s$ vertices of $G$. It was shown by Lapaugh and Papadimitriou~\cite{LapaughPapadimitriou84} that deciding whether $G$ contains a simple path from $s$ to $t$ via $m$ is NP-complete. By Lemma \ref{lem:paththrough}, such a path exists if and only if  $P_{s,m,t}(G) \notin \DBEC$, hence recognizing \DBEC is \coNP{}-hard.
Let $G_k$ denote the graph obtained from $G$ by adding $k$ disjoint bad even cycles. Clearly $G_k$ has a deletion \DBEC-Backdoor of size $\leq k$ if and only if $P_{s,m,t}(G_k) \in \DBEC$, hence $k$-\textsc{Deletion \DBEC-Backdoor Detection} is \coNP{}-hard. Similarly, $G_k$ has a strong \DBEC-Backdoor of size $\leq k$ if and only if $P_{s,m,t}(G_k) \in \DBEC$,
and so \textsc{$k$-Strong \DBEC-Backdoor Detection} is \coNP{}-hard as well.
\end{proof}

\begin{THE}\label{the2}
Let $k>0$ be a constant. The problems $k$-\textsc{Deletion \DEC-Backdoor Detection} and $k$-\textsc{Strong \DEC-Backdoor Detection} are poly\-nomial-time trac\-ta\-ble.
\end{THE}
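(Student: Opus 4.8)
The goal is to show both backdoor‑detection problems for the target class \DEC{} are polynomial‑time tractable (for fixed $k$). The key structural fact I would exploit is that, unlike bad even cycles, the property of containing a directed even cycle is not sensitive to edge labels — it depends only on the underlying directed graph $D_P$. By Lemma~\ref{lem:badeven2even}, recognizing membership in \DEC{} amounts to deciding whether the unlabeled graph obtained from $D_P$ (which is essentially $D_P$ itself, up to the subdivision bookkeeping) has a cycle of even length, and this is polynomial‑time decidable by the classical result of Robertson, Seymour and Thomas (building on Thomassen) on even directed cycles; alternatively, even‑cycle detection in digraphs follows from the even‑dicycle / Pfaffian‑orientation line of work. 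So recognizing \DEC{} is in P, which is the feature that was \emph{missing} for \DBEC{} in Theorem~\ref{the:paranp}.

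\textbf{Deletion backdoors.} For $k$‑\textsc{Deletion \DEC-Backdoor Detection}, I would simply enumerate all $\binom{n}{\le k} = \BigO{n^k}$ candidate sets $X$ of atoms, form $P - X$ in polynomial time, and test $P-X\in\DEC$ using the polynomial‑time recognition algorithm above. Since $k$ is a constant, the whole procedure runs in polynomial time. Correctness is immediate from the definition of deletion backdoor. (This already subsumes the non‑uniform nature of the statement: the exponent depends on $k$.)

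\textbf{Strong backdoors.} This is the part that needs an argument, because a priori one must check $P_\tau\in\DEC$ for all $2^{|X|}$ truth assignments. The idea is to reduce the strong case to the deletion case. First observe that for every $\tau\in\ta X$ we have $P_\tau \subseteq P-X$: this is exactly the containment proved in the Lemma preceding this subsection (the one showing every deletion \DEC-backdoor is a strong \DEC-backdoor). Since \DEC{} is closed under taking subprograms — deleting a rule can only remove cycles from $D_P$, never create new ones — it follows that $P - X\in\DEC$ implies $P_\tau\in\DEC$ for \emph{every} $\tau$, i.e. every deletion \DEC-backdoor is a strong one. Conversely, I would argue the standard partial converse: $X$ is a strong \DEC-backdoor iff it is a deletion \DEC-backdoor, because an even directed cycle $C$ in $D_{P-X}$ that survives the deletion of $X$ can be reconstructed in some $P_\tau$. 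Concretely, a cycle in $D_{P-X}$ comes from a set of rules whose heads/bodies, after stripping atoms of $X$, still form the cycle; choosing $\tau$ to set all atoms of $X$ appearing positively-or-as-heads in those rules to the value that keeps the rules alive and removes the $X$-literals — which is the canonical ``$P_\tau$ agrees with $P-X$ on a chosen rule set'' trick used for deletion-vs-strong backdoors for hereditary classes — yields $P_\tau\notin\DEC$. Hence the strong and deletion problems coincide, and the algorithm from the deletion case solves both.

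\textbf{Main obstacle.} The main obstacle is the strong$\,\leftrightarrow\,$deletion equivalence: one must verify carefully that from an even directed cycle in $D_{P-X}$ one can always choose a single assignment $\tau\in\ta X$ under which the corresponding rules all survive into $P_\tau$ with the $X$-literals removed, so that the cycle reappears in $D_{P_\tau}$. The subtle point is that the rules witnessing the cycle may contain atoms of $X$ in conflicting roles (head vs.\ negative body) across different rules; one needs that $X$ can be split consistently, which works because each rule contributes at most one edge of the cycle and the constraints a single rule imposes on $\tau$ are mutually satisfiable. Once this is in place, the polynomial‑time recognition of \DEC{} (the even‑dicycle theorem) does the rest, and the brute‑force enumeration over size‑$k$ sets completes the proof.
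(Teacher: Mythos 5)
Your handling of the deletion case and of \DEC{} recognition matches the paper: enumerate all $\binom{n}{k}\leq n^k$ candidate sets and decide membership in \DEC{} via Lemma~\ref{lem:badeven2even} together with the even-cycle results of Vazirani--Yannakakis and Robertson--Seymour--Thomas. (One caveat on your motivation: containing a directed even cycle \emph{does} depend on the edge labels, since ``even'' here refers to the parity of the number of negative edges, not to the length of the cycle; the labels only become irrelevant after the subdivision of positive edges performed in Lemma~\ref{lem:badeven2even}.)

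The genuine gap is in the strong case. The claimed equivalence ``$X$ is a strong \DEC-backdoor iff it is a deletion \DEC-backdoor'' is false, and the reconstruction argument fails exactly at the point you flag as subtle: the constraints that \emph{different} rules of the cycle impose on $\tau$ need not be jointly satisfiable, even though each single rule's constraints are. Concretely, take $P=\{a\leftarrow b,x;\; b\leftarrow a,\pnot x\}$ and $X=\{x\}$. The cycle $(a,b,a)$ has zero negative edges, hence is even, and it survives in $P-X$, so $X$ is \emph{not} a deletion \DEC-backdoor; but under $\tau(x)=0$ the first rule is removed and under $\tau(x)=1$ the second is, so both reducts $P_\tau$ are acyclic and $X$ \emph{is} a strong \DEC-backdoor. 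An algorithm that only certifies deletion backdoors would therefore answer the strong problem incorrectly. The repair is the option you dismissed at the outset: since $k$ is a constant, $\Card{\ta{X}}=2^k$ is also a constant, so for each of the $n^k$ candidate sets one simply builds all $2^k$ reducts $P_\tau$ and tests each for membership in \DEC{} directly. This is exactly what the paper does, and no strong-versus-deletion equivalence is needed.
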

\begin{proof}
By Lemma~\ref{lem:badeven2even}, we can reduce to the problem of finding a cycle of even length in the unlabeled dependency graph. 
Vazirani and Yannakakis~\cite{VaziraniYannakakis88} have shown that finding a cycle of even length in a directed graph is equivalent to finding a Pfaffian orientation of a graph. Since Robertson, Seymour, and Thomas~\cite{RobertsonSeymourThomas99} have shown that a Pfaffian orientation can be found in polynomial time.
For each possible backdoor of size $k$ we need to test ${n \choose k} \leq n^k$ subsets $S\subseteq V$ of size $k$ whether $D_P - S$ contains a cycle of even length, respectively $D_{P_{\tau}}$ for $\tau \in \ta{S}$.
Since we can do this in polynomial time for each fixed $k$, the theorem follows.
\end{proof}

In Theorem~\ref{the2} we consider $k$ as a constant. In the following proposition we show that if $k$ is considered as part of the input, then the problem $k$-\textsc{Strong \DEC-Backdoor Detection} is polynomial-time equivalent to the problem \textsc{Hitting Set} and $k$ is preserved. An instance of this problem is a pair $(\SSS,k)$ where $\SSS=\{S_1,\dots,S_m\}$ is a family of sets and $k$ is an integer. The question is whether there exists a set $H$ of size at most $k$ which intersects with all the $S_i$; such $H$ is a hitting set. Note that there is strong theoretical evidence that the problem \textsc{Hitting Set} does not admit uniform poly\-nomial-time tractability~\cite{DowneyFellows99}.

\begin{PRO}\label{the:w2}
The problem $k$-\textsc{Strong \DEC-Backdoor Detection} is poly\-nomial-time equivalent to the problem \textsc{Hitting Set}.
\end{PRO}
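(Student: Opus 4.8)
The plan is to prove the equivalence by two $k$-preserving polynomial-time reductions, preceded by a graph-theoretic reformulation of strong $\DEC$-backdoors. By Lemma~\ref{lem:badeven2even}, a normal program $Q$ lies in $\DEC$ exactly when the unlabeled graph $(D_Q)'$ has no cycle of even length; and since deleting an atom only removes edges of a dependency graph, $D_{P-X}=D_P-X$, so a deletion $\DEC$-backdoor of $P$ is precisely a set of atoms meeting every directed even cycle of $D_P$, and by the lemma that every deletion $\CCC$-backdoor is a strong $\CCC$-backdoor such a set is in particular a strong backdoor. For the reverse direction I would spell out when a directed even cycle $C$ of $D_P-X$ reappears in some $D_{P_\tau}$ with $\tau\in\ta{X}$ — namely when $C$ admits a system $W$ of witnessing rules in which no atom of $X$ occurs positively in one rule and negatively in another — and conclude that $X$ is a strong $\DEC$-backdoor of $P$ if and only if $X$ hits every member of the family $\FFF(P)$ consisting of the sets $V(C)\cup(B^+(W)\cap B^-(W))$, where $C$ ranges over directed even cycles of $D_P$, $W$ over its witness systems, and $V(C)$ denotes the atoms on $C$. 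The first technical point is to show that $\FFF(P)$ may be taken of polynomial size and produced in polynomial time, rather than being the a priori exponential family of all even cycles; I expect this to use the even-cycle recognition machinery (Pfaffian orientations) already invoked in the proof of Theorem~\ref{the2}.

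Granting this reformulation, the reduction $k$-\textsc{Strong \DEC-Backdoor Detection} $\to$ \textsc{Hitting Set} is just $P\mapsto(\FFF(P),k)$, which is polynomial time and leaves $k$ unchanged. For the converse, from an instance $(\SSS=\{S_1,\dots,S_m\},k)$ with $U=\bigcup_iS_i$ I would construct a normal program $P_{\SSS}$ on atoms $U$ together with private auxiliary atoms, containing for each $i$ a directed even cycle $C_i$ with $V(C_i)\cap U=S_i$ — for instance an all-positive directed cycle on $S_i$, whose unlabeled image has the even length $2|S_i|$ — arranged so that, up to auxiliary vertices, the $C_i$ are the only even cycles of $D_{P_{\SSS}}$. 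Then every strong $\DEC$-backdoor of size $\le k$ meets each $V(C_i)$ (otherwise the corresponding $\tau$ keeps all rules of $C_i$ and $P_\tau\notin\DEC$), hence restricts to a hitting set of size $\le k$; conversely a hitting set of $(\SSS,k)$ deletes a vertex from every $C_i$ and therefore from every even cycle, so it is a deletion — and by the lemma above a strong — $\DEC$-backdoor of the same size. This again preserves $k$, and together with the first reduction it gives the polynomial-time equivalence.

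I expect the main obstacle to be the control of spurious even cycles in $P_{\SSS}$: when two sets $S_i,S_j$ overlap in two or more elements one can splice a sub-path of $C_i$ with a sub-path of $C_j$ into a fresh even cycle avoiding most of $S_i\cup S_j$, which would ruin the backward direction. I would handle this either by routing each $C_i$ through fresh per-set vertices so that distinct cycles can interact only at a single shared atom, or by first preprocessing $(\SSS,k)$ into an equivalent instance whose hypergraph is linear (pairwise intersections of size at most one) with the same optimal value of $k$, and then verifying that no spurious even cycle can close up. The other place where real work is needed is bounding and enumerating $\FFF(P)$ in polynomial time; both tasks amount to making the correspondence between even cycles and hitting-set constraints exact rather than merely morally true.
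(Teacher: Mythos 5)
Your two-reduction plan matches the shape of the statement, and your \textsc{Hitting Set}-to-backdoor gadget is in the same spirit as the paper's, but both halves have genuine gaps. For the reduction from \textsc{Hitting Set}, realizing each $S_i$ as an all-positive directed cycle on the elements of $S_i$ runs into exactly the spurious-cycle problem you name, and neither proposed repair works as stated: inserting fresh per-set routing vertices prevents shared edges but not splicing --- if $|S_i\cap S_j|\geq 2$ you can still combine a segment of $C_i$ with a segment of $C_j$ into a new all-positive (hence even) cycle that a legitimate hitting set may miss, killing the forward direction --- and moreover a backdoor may simply select the cheap auxiliary vertices instead of elements of $S_i$, so that it does not ``restrict to'' a hitting set (you would at least need the standard replacement argument, which you do not give). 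Linearizing the hypergraph while preserving the optimal $k$ is itself an unproved claim. The paper's construction avoids all of this at once: the elements of $\bigcup_i S_i$ occur only in rule \emph{bodies}, hence are sinks of $D_P$ and lie on no cycle whatsoever; each constraint $S_i$ is instead enforced by $k+1$ disjoint even $2$-cycles $(a_i^j,b_i^j)$ on fresh atoms whose rules carry $S_i$ in their bodies, so that any $s\in S_i$ in the backdoor destroys every such cycle under every assignment ($\tau(s)=0$ removes $s_i^j$ via $B^+$, $\tau(s)=1$ removes $r_i^j$ via $B^-$), while avoiding $S_i$ forces the backdoor to hit $k+1$ disjoint cycles, which is impossible with $k$ atoms. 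The $(k+1)$-fold replication and the body-only placement of $\bigcup_i S_i$ are the two ingredients your sketch is missing.

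For the converse reduction, the assertion that the family $\FFF(P)$ of constraints arising from even cycles and their ``witness systems'' can be taken of polynomial size is the actual crux, and deferring it to the Pfaffian-orientation machinery does not close it: that machinery \emph{decides existence} of an even cycle, it does not enumerate a polynomial-size collection of minimal even cycles, of which a directed graph may have exponentially many. Your characterization of when a cycle of $D_P-X$ reappears in some $D_{P_\tau}$ also needs more care than membership of $V(C)\cup(B^+(W)\cap B^-(W))$ in a set system, since survival of all witnessing rules under a single common $\tau$ is a sign-consistency condition across the whole rule system. To be fair, the paper itself only writes out the reduction \emph{from} \textsc{Hitting Set} and refers to Fichte and Szeider for the remainder, but as written your argument does not supply the missing direction either.
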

\begin{proof} The proof is very similar to the proof for target classes without respecting the parity by Fichte and Szeider~\cite{FichteSzeider11}.
  We construct a program $P$ as follows.  As atoms we take the elements
  of $\mathcal{S}=\bigcup_{i=1}^m S_i$ and new atoms $a_i^j$ and $b_i^j$ for
  $1\leq i \leq m$, $1\leq j \leq k+1$.  For each $1\leq i \leq m$ and
  $1\leq j \leq k+1$ we take two rules $r_i^j$, $s_i^j$ where
  $H(r_i^j)=\{a_i^j\}$, $B^-(r_i^j)=S_i\cup \{b_i^j\}$, $B^+(r_i^j)=\emptyset$;
  $H(s_i^j)=\{b_i^j\}$, $B^-(s_i^j)=\{a_i^j\}$, $B^+(s_i^j)=\mathcal{S}$.

  We show that $\SSS$ has a hitting set of size at most $k$ if and only if $P$
  has a strong \DEC-backdoor of size at most $k$. Let $\SSS$ be
  a family of sets and $H$ an hitting set of $\SSS$ of size at most $k$.
 Choose arbitrarily an atom $s_i
  \in \at(P)\cap \mathcal{S}$ and a truth assignment $\tau \in \ta{H}$. If
  $s_i \in \tau^{-1}(0)$, then $B^+(s_i^j)\cap \tau^{-1}(0) \neq \emptyset$
  for $1 \leq j \leq k+1$. Thus $s_i^j\notin P_\tau$. If $s_i\in
  \tau^{-1}(1)$, then $B^-(r_i^j) \cap \tau^{-1}(1) \neq \emptyset$ for $1
  \leq j \leq k+1$. Thus $r_i^j \notin P_\tau$. Since $H$ contains at least
  one element $e \in S$ from each set $S \in \mathcal{S}$, the truth assignment reduct
  $P_\tau \in \DEC$. We conclude that $H$ is a strong
  \DEC-backdoor of $P$ of size at most $k$.

Conversely, let $X$ be a strong \DEC-backdoor of $P$ of size at most $k$. Since the directed dependency graph $D_P$ contains $k+1$ directed even cycles $(a_i^j, b_i^j, a_i^j)$ and $a_i^j$ (respectively $b_i^j$) is contained in exactly one rule $r_i^j$ (respectively $s_i^j$), $\Card{\bigcup a_i^j}>k$ and $\Card{\bigcup b_i^j}>k$. Hence we have to select atoms from $S_i$. Since $S_i \subseteq B^-(r_i^j)$ for $1\leq i\leq m$ and $1 \leq j \leq k+1$, we have to select at least one element from each $S_i$ into the backdoor $X$. Thus we have established that $X$ is a hitting set of $\SSS$, and so the theorem follows.
\end{proof}

\subsection{Backdoor Detection for Undirected Target Classes}
The results of Theorem~\ref{the:paranp} suggest to consider the backdoor detection on the weaker target classes based on undirected even acyclicity.

\begin{LEM}
Let $P$ be a program, $P \in \EC$ can be decided in polynomial time.
\end{LEM}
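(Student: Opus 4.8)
The plan is to reduce membership in \EC{} to the problem of detecting a cycle of even length in an undirected graph, and then invoke a known polynomial-time algorithm for the latter. First I would recall that, by definition, a program $P$ lies in \EC{} iff its undirected dependency graph $U_P$ has no even cycle (a cycle with an even number of negative vertices). By Lemma~\ref{lem:badeven2even}, applied to the signed undirected graph $U_P$ viewed with its negative vertices recast as negative edges, $U_P$ has an even cycle if and only if the associated unlabeled undirected graph $U_P'$ has a cycle of even length. So it suffices to decide, in polynomial time, whether a given undirected graph contains a cycle of even length.

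The key step is then to cite the classical fact that detecting an even cycle in an undirected graph is polynomial-time solvable; this is considerably easier than the directed case (the directed case needed the Pfaffian machinery of Robertson, Seymour, and Thomas cited in Theorem~\ref{the2}). Concretely, an undirected graph has \emph{no} even cycle if and only if every block (2-connected component) is either a single edge, a single vertex, or an odd cycle — equivalently, each edge lies on at most one cycle and every cycle is odd. Such a characterization can be checked in linear time: compute the block-cut tree, and for each nontrivial block test whether it is exactly a cycle (every vertex has degree $2$) of odd length. Alternatively, one can argue directly that a graph with minimum degree $\geq 3$, or with two distinct cycles sharing an edge or path, always contains an even cycle, and handle the remaining sparse cases by inspection.

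The steps in order are: (i) build $U_P$ from $P$ in polynomial time; (ii) form the unlabeled graph $U_P'$ by splitting negative vertices as in the construction preceding Lemma~\ref{lem:badeven2even}; (iii) apply Lemma~\ref{lem:badeven2even} to reduce to deciding whether $U_P'$ has an even-length cycle; (iv) run the linear-time even-cycle test on $U_P'$ (via block decomposition, or via the minimum-degree/ear-decomposition argument) and answer accordingly.

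I expect the only real obstacle is step (iv): making precise and correct the structural characterization of undirected graphs without even cycles. One must be careful that a graph may have many odd cycles yet still no even cycle (e.g.\ a single odd cycle, or disjoint odd cycles), but as soon as two cycles overlap in more than one vertex — sharing a path — their symmetric-difference-type combination yields a cycle whose length has the opposite parity of the longer of two of the three cycles through the shared endpoints, forcing an even cycle. Formalizing this ``two overlapping odd cycles give an even cycle'' lemma, and the reduction to the case where every block is a single edge or an odd cycle, is the part that needs genuine care; the rest is bookkeeping. Since $U_P'$ has size polynomial in $|P|$ and the even-cycle test is linear, the overall procedure runs in polynomial time, which proves the lemma.
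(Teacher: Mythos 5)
Your proposal is correct, and its overall structure matches the paper's: both reduce membership in \EC{} via Lemma~\ref{lem:badeven2even} to deciding whether the unlabeled version of the undirected dependency graph $U_P$ contains a cycle of even length. The difference lies entirely in how that last decision problem is handled. The paper simply cites Yuster and Zwick for polynomial-time even-cycle detection in undirected graphs, whereas you give a self-contained, elementary argument: a graph has no even cycle if and only if every block is a single vertex, a single edge, or an odd cycle, which is checkable in linear time from the block-cut tree. Your characterization is sound --- any $2$-connected block that is not a cycle contains a theta subgraph, i.e.\ three internally disjoint $u$--$v$ paths of lengths $a,b,c$, and since $(a+b)+(b+c)+(a+c)=2(a+b+c)$ is even, not all three of the cycles $a+b$, $b+c$, $a+c$ can be odd --- so the ``two overlapping cycles force an even cycle'' step you flag as the delicate point does go through. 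What your route buys is a transparent linear-time test with no external machinery; what the citation buys is brevity and a pointer to algorithms that also \emph{find} short even cycles. Either way the lemma follows, and your handling of the negative vertices of $U_P$ (recasting the degree-$2$ negative vertices as negative edges before unlabeling) is no less careful than the paper's own appeal to Lemma~\ref{lem:badeven2even}.
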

\begin{proof}
Let $P$ be a program and $G$ its dependency graph $U_P$. 
Lemma~\ref{lem:badeven2even} allows to consider the problem of finding an even cycle in the unlabeled version of $U_P$. Since Yuster and Zwick~\cite{YusterZwick94} have shown that finding an even cycle in an undirected graph is polynomial-time solvable, the lemma holds.
\end{proof}

\begin{LEM}
Let $P$ be a program. The problem of deciding whether $P \in \BEC$ can be solved in polynomial time.
\end{LEM}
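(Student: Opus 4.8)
The plan is to imitate the previous two lemmas, but with an extra ingredient that forces the even cycle we are hunting for to actually pass through a negative vertex. Let $\hat U_P$ denote the signed undirected multigraph obtained from $D_P$ by forgetting edge directions while keeping the partition of the edges into positive and negative ones, so that $U_P$ is exactly the subdivision of the negative edges of $\hat U_P$; cycles of $U_P$ then correspond to cycles of $\hat U_P$, the number of negative vertices on a cycle being the number of negative edges it uses. Hence $P\notin\BEC$ holds if and only if $\hat U_P$ has a cycle that uses a positive and even number of negative edges. First I would reduce this to a family of parity-path questions: such a cycle exists if and only if there is a negative edge $e=\{x,y\}$ of $\hat U_P$ for which $\hat U_P-e$ contains a simple path from $x$ to $y$ that uses an \emph{odd} number of negative edges. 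Indeed, a cycle with at least one negative edge passes through some negative edge $e=\{x,y\}$, and deleting $e$ turns the cycle into a simple $x$-$y$ path whose number of negative edges has the opposite parity; conversely, re-inserting $e$ into such a path gives a simple cycle with an even and positive number of negative edges.

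It then remains to decide, for a fixed negative edge $e=\{x,y\}$, whether $\hat U_P-e$ has a simple $x$-$y$ path with an odd number of negative edges, in polynomial time. As in the proof of Lemma~\ref{lem:badeven2even}, I would move to the unlabeled graph $G_e$ obtained by replacing each positive edge of $\hat U_P-e$ by a path of length two. Since all subdivision vertices have degree two, the simple $x$-$y$ paths of $G_e$ are in bijection with those of $\hat U_P-e$, and a path of $\hat U_P-e$ uses an odd number of negative edges if and only if the corresponding path of $G_e$ has odd length. So the question becomes whether $G_e$ has a simple $x$-$y$ path of odd length, which is an instance of the (odd/even) $s$-$t$ path problem; in undirected graphs this is polynomial-time solvable by LaPaugh and Papadimitriou~\cite{LapaughPapadimitriou84} (an odd $x$-$y$ path is an even path once a pendant vertex is attached to $x$, and the undirected even-path problem reduces to matching). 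To finish, I would run this test for each of the $\BigO{\Card{\at(P)}^2}$ negative edges $e$ of $\hat U_P$, declaring $P\in\BEC$ iff every test fails; each test and each auxiliary graph is constructed in polynomial time, so the overall procedure is polynomial.

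The step I expect to be the genuine obstacle is the parity-path subproblem in the middle: unlike plain reachability, a shortest $x$-$y$ \emph{walk} of the prescribed parity need not be a simple path, so the polynomial bound cannot be had by a naive breadth-first argument and one must invoke the nontrivial, matching-based algorithm for simple $s$-$t$ paths of prescribed parity in undirected graphs. The remaining ingredients --- the two reductions and the bijection between paths under edge subdivision --- are the same bookkeeping that already underpins the treatment of \EC and Lemma~\ref{lem:badeven2even}.
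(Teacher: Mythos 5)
Your proposal is correct and follows essentially the same route as the paper: delete each negative edge $e=\{x,y\}$ in turn, pass to the unlabeled (subdivided) graph, and test for an odd simple $x$--$y$ path, reducing membership in \BEC{} to polynomially many instances of the undirected parity-path problem. The only difference is cosmetic --- you spell out the parity bookkeeping more carefully and invoke LaPaugh--Papadimitriou~\cite{LapaughPapadimitriou84} for undirected parity paths, whereas the paper cites Arikati and Peled~\cite{ArikatiPeled96} for the same subroutine.
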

\begin{proof}
Let $P$ be a program and $G$ its dependency graph $U_P$. 
For a negative edge $e$ of $G$ we define $G_e$ to be the unlabeled graph of $G-e$. Now $G$ contains a bad even cycle if and only if $G$ has an edge $e=\{s,t\}$ such that $G_e$ contains an odd path from $s$ to $t$. 
Since Arikati and Peled~\cite{ArikatiPeled96} have shown that finding an odd path in an undirected graph is polynomial-time solvable, the lemma follows.
\end{proof}

\begin{THE}\label{the:xp}
Let $k>0$ be a constant. For the target classes $\CCC \in \{ \EC,\\\BEC\}$ the problems $k$-\textsc{Deletion $\CCC$\hy Backdoor Detection} and $k$-\textsc{Strong $\CCC$\hy Backdoor Detection} are non-uniform polynomial-time tractable.
\end{THE}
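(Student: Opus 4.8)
The plan is to reduce both detection problems, for each fixed $k$, to polynomially many membership tests of the form ``is this program in $\CCC$?'', and to discharge those tests via the two lemmas immediately preceding this theorem, which show that membership in $\EC$ and in $\BEC$ can be decided in polynomial time. Since $k$ is a constant there are only $\bigoh(n^k)$ candidate backdoors, where $n=\Card{\at(P)}$, so the overall procedure runs in polynomial time; the exponent grows with $k$, which is exactly why the statement is about \emph{non-uniform} polynomial-time tractability rather than fixed-parameter tractability, precisely as in Theorem~\ref{the2}.

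For $k$-\textsc{Deletion $\CCC$\hy Backdoor Detection} I would enumerate every subset $X\subseteq\at(P)$ with $\Card{X}\le k$ --- there are $\sum_{i=0}^{k}\binom{n}{i}=\bigoh(n^k)$ of them, and we may restrict to $X\subseteq\at(P)$ since atoms outside $\at(P)$ have no effect on $P-X$. For each such $X$ I would build $P-X$ (a polynomial-time step, as it only strikes atoms from heads and bodies), test $P-X\in\CCC$ with the appropriate preceding lemma, return the first $X$ that passes, and report failure if none does. Correctness is immediate from the definition of a deletion $\CCC$\hy backdoor, and the running time is $\bigoh(n^k)$ times a polynomial.

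For $k$-\textsc{Strong $\CCC$\hy Backdoor Detection} I would proceed the same way, but for each candidate $X$ I would additionally loop over all $\Card{\ta{X}}\le 2^k$ truth assignments $\tau\in\ta{X}$, compute the truth assignment reduct $P_\tau$ (polynomial-time by Definition~\ref{def:truthassignment-reduct}, with $\at(P_\tau)\subseteq\at(P)$), and test $P_\tau\in\CCC$; I would accept $X$ exactly when every $\tau$ passes. Because $2^k$ is a constant this only multiplies the running time by a constant, so the bound is again $\bigoh(n^k)$ times a polynomial.

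I do not expect a genuine obstacle here. The one point that needs a careful word is the legitimacy of applying the membership algorithms to $P-X$ and to $P_\tau$: one must observe that these reducts are still (essentially) normal programs, so that the dependency graphs $D_{P-X}$, $U_{P-X}$, $D_{P_\tau}$, $U_{P_\tau}$ --- and hence the notions of bad even and even cycle --- are well defined. This is routine, since both reducts only delete whole rules or delete atoms from heads and bodies, and deleting the single head atom merely turns a rule into a constraint, which contributes no edges to the dependency graph and is therefore harmless for the even-cycle analysis. The genuinely interesting content is that the $n^k$ factor apparently cannot be removed under standard assumptions, which is the point of the surrounding discussion and of Proposition~\ref{the:w2}.
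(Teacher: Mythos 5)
Your proposal is correct and follows essentially the same route as the paper: enumerate the $\binom{n}{k}\leq n^k$ candidate sets, and for each one test membership of $P-X$ (respectively of every $P_\tau$, $\tau\in\ta{X}$) in the target class using the polynomial-time recognition results for \EC{} and \BEC{} established in the two preceding lemmas. Your added remark about the well-definedness of the dependency graphs of the reducts is a harmless elaboration the paper leaves implicit.
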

\begin{proof}
Let $P$ be a program and $U_P=(V,E)$ its undirected dependency graph. Let $n$ be the size of $V$. For each possible backdoor of size $k$ we need to test ${n \choose k} \leq n^k$ subsets $S\subseteq V$ of size $k$ whether $U_P - S$ contains a (bad) cycle of even length, respectively $U_{P_{\tau}}$ for $\tau \in \ta{S}$.
Since we can do this in polynomial time for each fixed $k$, the problems  $k$-\textsc{Deletion} $\CCC$\hy \textsc{Backdoor} and $k$-\textsc{Strong $\CCC$\hy Backdoor Detection} are non-uniform polynomial-time tractable.
\end{proof}

In Theorem \ref{the:xp} we consider $k$ as a constant. If $k$ is considered as part of the input we can show that for each class $\CCC\in \{\EC, \BEC\}$ the problem $k$-\textsc{Strong $\CCC$\hy Backdoor Detection} is polynomial-time equivalent to \textsc{Hitting Set}~\cite{FichteSzeider11}. As mentioned before for \DEC there is strong theoretical evidence that $k$-\textsc{Strong $\CCC$\hy Backdoor Detection} does not admit a uniform polynomial-time tractability result. 

\begin{PRO}
	
The problem $k$-\textsc{Strong \CCC\hy Backdoor Detection} is poly\-nomi\-al-time equivalent to the problem \textsc{Hitting Set} for each class $\CCC\in \{\EC,\\ \BEC\}$.
	
\end{PRO}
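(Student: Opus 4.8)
The plan is to establish the equivalence by exhibiting the two reductions, with $k$ preserved. The reduction from \textsc{Hitting Set} to $k$-\textsc{Strong $\CCC$\hy Backdoor Detection} carries essentially all the content; the converse reduction (which only asserts that the problem lies in the same parameterized class as \textsc{Hitting Set}) I would obtain exactly as Fichte and Szeider~\cite{FichteSzeider11} do, by reducing to a problem of hitting the relevant obstructions. So the bulk of the argument is the forward reduction, which I would model on the \DEC construction of Proposition~\ref{the:w2} --- itself an adaptation of the Fichte--Szeider reduction for the classes without parity.

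Given an instance $(\SSS,k)$ with $\SSS=\{S_1,\dots,S_m\}$ I would build a normal program $P$ whose atoms are the ground elements of $\bigcup_{i=1}^m S_i$ together with, for each $1\le i\le m$ and each copy index $1\le j\le k+1$, a block $\BBB_i^j$ of a constant number of fresh atoms carrying a constant number of fresh rules. The block $\BBB_i^j$ should play the role that the pair $r_i^j,s_i^j$ plays in Proposition~\ref{the:w2}: the elements of $S_i$ occur in the negative body of one of its rules and in the body of another, so that assigning $1$ to any element of $S_i$ deletes the first rule of every block $\BBB_i^1,\dots,\BBB_i^{k+1}$ by clause~(3) of Definition~\ref{def:truthassignment-reduct} and assigning $0$ deletes the other by clause~(2), while if neither happens the surviving rules of $\BBB_i^j$ contribute a bad cycle to $U_P$. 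The single new design requirement, compared with Proposition~\ref{the:w2}, is that this bad cycle pass through \emph{exactly two} negative vertices, so that it is a bad \emph{even} cycle and hence an obstruction for both \BEC and \EC. I would also want the blocks laid out so that no atom outside $\bigcup_i S_i$ lies on the obstruction of more than one block, so that any cycle running through two distinct blocks (such a cycle must pass through shared ground elements) carries an \emph{odd} number of negative vertices --- which is legitimate in \BEC and in \EC --- and so that no negative\hy vertex\hy free cycle runs through two blocks.

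With such a $P$ the correctness proof should be routine. Forward: given a hitting set $H$ with $\Card{H}\le k$ and any $\tau\in\ta{H}$, for each $i$ the set $H$ meets $S_i$, so by the deletion property above one rule of each of $\BBB_i^1,\dots,\BBB_i^{k+1}$ disappears in $P_\tau$; hence no block obstruction survives in $U_{P_\tau}$, and by the layout $U_{P_\tau}$ contains no even cycle (no bad even cycle) at all, i.e.\ $P_\tau\in\CCC$, so $H$ is a strong $\CCC$\hy backdoor of size $\le k$. Converse: given a strong $\CCC$\hy backdoor $X$ with $\Card{X}\le k$, fix $i$; if $X\cap S_i=\emptyset$ then, because the $k+1$ blocks of group $i$ are pairwise vertex\hy disjoint apart from $S_i$, the at most $k$ atoms of $X$ miss some block $\BBB_i^j$ entirely, and the truth assignment $\tau\in\ta{X}$ sending every atom of $X\cap\SSS$ to $1$ keeps both relevant rules of $\BBB_i^j$, so the bad even cycle of $\BBB_i^j$ lies in $U_{P_\tau}$ --- contradicting $P_\tau\in\CCC$. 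Therefore $X\cap S_i\neq\emptyset$ for all $i$, so $X\cap\SSS$ is a hitting set of size $\le k$, which completes the forward reduction; combining it with the converse reduction gives the claimed equivalence.

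The hard part will be the layout condition of the second paragraph, which is precisely where this proof must depart from Proposition~\ref{the:w2}. In the directed setting the ground elements are sinks of $D_P$ and therefore lie on no directed cycle, so the blocks cannot interact and there is nothing to check. In the undirected graph $U_P$ a shared ground element $e$ has large degree, and two blocks of the same group joined at $e$ naively close up into a short cycle through the negative vertices sitting on the edges incident to $e$ --- a bad \emph{even} cycle, which would break the forward direction for \BEC, and which for \EC is an obstruction even with no negative vertex present. I would resolve this by routing each block's connection to $S_i$ through a block\hy private negative intermediary, so that any walk that enters a block at a ground element and leaves it at a ground element crosses an odd number of negative vertices; then every cycle visiting two distinct blocks is bad and odd, hence harmless for both target classes, while the internal obstruction of a block still has two negative vertices. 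Checking that this really eliminates every spurious even cycle --- among the copies of one set, across different sets, and in every truth\hy assignment reduct of the backdoor --- is the one genuinely new piece of work; the rest transfers from Proposition~\ref{the:w2} essentially verbatim.
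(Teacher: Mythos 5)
Your overall plan matches the paper's: keep the Hitting Set reduction of Proposition~\ref{the:w2} and only redesign the per-copy gadget so that its obstruction is a bad \emph{even} cycle in $U_P$, and you correctly identify the one place where the undirected setting differs from \DEC, namely that ground elements acquire high degree in $U_P$ and can close up spurious cycles between gadget copies. But the fix you propose for that difficulty does not work, and it is also not the paper's fix. You route every block's attachment to $S_i$ so that each traversal of a block between two ground elements crosses an \emph{odd} number of negative vertices, and conclude that every cycle through two distinct blocks is odd. That conclusion is false: a simple cycle through $t\ge 2$ blocks alternates blocks and ground elements and therefore accumulates $t$ odd contributions, which is \emph{even} whenever $t$ is even. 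In particular, two copies $\BBB_i^j,\BBB_i^{j'}$ of the same group share all of $S_i$, so as soon as $\Card{S_i}\ge 2$ there is a two-block cycle, and by your own parity design it is a bad even cycle --- an obstruction for both \EC and \BEC. This matters because in your gadget the elements of $S_i$ sit in the negative body of one rule and the positive body of \emph{another}: a truth assignment to a hitting element deletes only one of the two, so the surviving rule keeps its edges to $S_i\setminus X$ in $U_{P_\tau}$ and these two-block even cycles survive in the reduct. The forward direction (hitting set $\Rightarrow$ strong backdoor) then fails.

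The paper avoids this with a simpler device: it puts $S_i$ into \emph{both} $B^+(r_i^j)$ and $B^-(r_i^j)$ of the \emph{same} rule (and sets $B^+(s_i^j)=\emptyset$), so that under every $\tau\in\ta{H}$ for a hitting set $H$ the rule $r_i^j$ is removed by clause (2) or (3) of Definition~\ref{def:truthassignment-reduct} no matter which truth value the hitting element receives. Since $r_i^j$ is the only rule mentioning ground elements, all edges incident to ground elements vanish from $U_{P_\tau}$, the reduct's dependency graph degenerates to disjoint paths $b_i^j\hy v\hy a_i^j$, and there is nothing left to check about cross-block parities. Your converse direction (the $k+1$ vertex-disjoint even obstructions per group forcing $X\cap S_i\neq\emptyset$) is fine and matches the paper, as does deferring the reduction back to \textsc{Hitting Set} to~\cite{FichteSzeider11}; the gap is confined to the forward direction, but it is the part you yourself flag as the ``genuinely new piece of work,'' and the idea you propose for it does not survive the parity check.
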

\begin{proof}

We modify the above reduction from \textsc{Hitting Set} by redefining the
rules $r_i^j$, $s_i^j$. We put $H(r_i^j)=\{a_i^j\}$, $B^-(r_i^j)=S_i\cup \{b_i^j\}$, $B^+(r_i^j)=S_i$; $H(s_i^j)=\{b_i^j\}$, $B^-(s_i^j)=\{a_i^j\}$,
$B^+(s_i^j)=\emptyset$.

\end{proof}

\section{Relationship between Target Classes} \label{sec:relationship}
In this section, we compare ASP parameters in terms of their \emph{generality}. We have already observed that every deletion $\CCC$\hy backdoor is a strong $\CCC$\hy backdoor for a target class $\CCC\in \{\EC,\DEC,\BEC,\DBEC\}$. For the considered target classes it is easy to see that if $\CCC \subseteq$ $\CCC'$, then every $\CCC'$ backdoor of a program $P$ is also a $\CCC$-backdoor, but there might exist smaller $\CCC'$-backdoors. Thus we compare the target classes among each other instead of the backdoors. By definition we have $\DBC \subsetneq \DBEC$, $\DEC \subsetneq \DBEC$, $\EC \subsetneq \BEC$, $\C \subsetneq \EC$, and $\DC \subsetneq \DEC$. The diagram in Fig.~\ref{fig:diagram} shows the relationship between the various classes, an arrow from $\CCC$ to $\CCC'$ indicates that $\CCC$ is a proper subset of $\CCC'$. If there is no arrow between two classes (or the arrow does not follow by transitivity of set inclusion), then the two classes are incomparable.

\begin{figure}
\centering
\begin{tikzpicture}[-latex,node distance=25mm,font=\small]
	\node[color=black](dbec){\DBEC};
	\node[left of=dbec,color=gray] (dbc) {\DBC};
	\node[left of=dbc,color=gray] (bc) {\BC};
	\node[left of=bc,color=gray] (c) {\C};
	\node[color=black,above left of=dbec] (dec) {\DEC};
	\node[left of=dec,color=gray] (dc) {\class{DC}};
	\node[color=black,below left of=dbec] (ebc) {\BEC};
	\node[color=black,left of=ebc] (ec) {\EC};
	\path(c) edge node[] (CEC) {} (ec);
	\path(c) edge (bc);
	\path(bc) edge node[] (BC2DBC) {}  (dbc);
	\path(dbc) edge node[right] (DBC2DBEC) {} (dbec);
	\path(dc) edge node[] (DC2DBC) {} (dbc);
	\path(dec) edge (dbec);
	\path(ebc) edge (dbec);
	\path(bc) edge (ebc);	
	\path(dc) edge node[above left=0.75cm] (DC2DEC) {} (dec);
	\path(ec) edge node[below =1cm] (EC2EBC) {} (ebc);
	\node[right of=ebc, transparent] (ebc2) {};
	\path(ebc) edge[transparent] node[below =1cm] (HELPER) {} (ebc2);
	\draw[-,thick] 	(HELPER) .. controls (1,-1) and (-1,-1) .. (DBC2DBEC)
					(DBC2DBEC) .. controls (-1,1) and (0,1) .. (0,2.5);
	\draw[-,thick] 	(-7,-2.8) .. controls (-7,-1) and (-6,-1) .. (BC2DBC)
					(BC2DBC) .. controls (-3,1) and (-2.9,1) .. (DC2DEC);
	\node[below of=EC2EBC,node distance=-2mm](xp){\it non-uniform};
	\node[below of=xp,node distance=4mm](xp2) {\it polynomial-time};
	\node[right of=xp,node distance=50mm]{\textit{co-NP-hard}};
	\node[left of=xp,node distance=50mm](fpt){\it uniform};
	\node[below of=fpt,node distance=4mm] (fpt2) {\it polynomial-time};
\end{tikzpicture}%
\caption{\small Relationship between classes of programs and state of the knowledge regarding the complexity of the problem \sc Deletion $\mathcal{C}\hy$Backdoor\normalfont. 
The new results are colored in black.} \label{fig:diagram}
\end{figure}
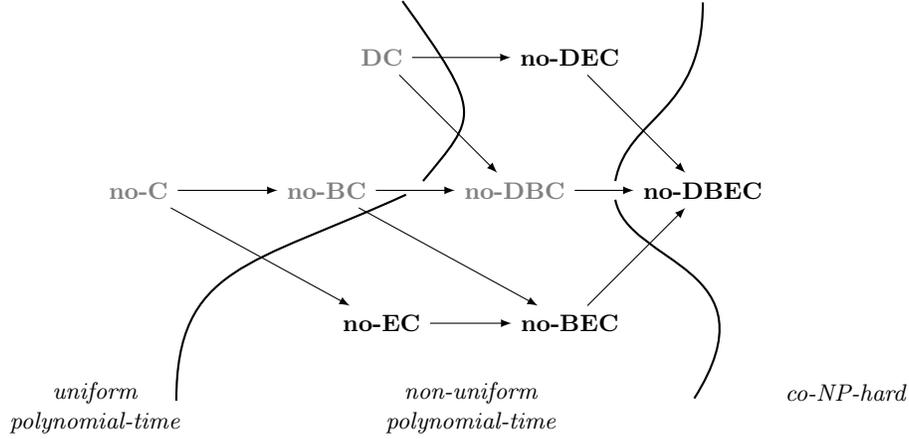

Lin and Zhao~\cite{LinZhao04} have studied even cycles as a parameter to ASP. They have proved that for fixed $k$ the main reasoning problems are polynomial-time solvable if the number of the shortest even cycles is bounded. The following proposition states that size of \DBEC-backdoors is a more general parameter than the number of even cycles.

\begin{PRO}
	There is a function $f$ such that $k \leq f(l)$ and no function $g$ such that $l < g(k)$ for all programs $P$ where $k$ is the size of the smallest deletion-$\DBEC$\hy backdoor of $P$ and $l$ is the number of even cycles in $D_P$. 
\end{PRO}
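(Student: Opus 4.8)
The plan is to prove the two halves separately, each by a short standalone argument of the kind used for comparing structural parameters. For the first half I would take $f(l)=l$ and argue that $k\le l$ always. The key observation — the same one underlying the proof of Theorem~\ref{the:paranp} — is that deletion backdoor detection here is a cycle transversal problem: removing a set $X$ of atoms from $P$ strips exactly the vertices $X$ (and their incident edges) from $D_P$ and creates no new edges, so $D_{P-X}$ is the induced subgraph $D_P-X$ (the empty-head rules that deletion may create contribute nothing to the dependency graph and are harmless). Hence $P-X\in\DBEC$ iff $X$ meets every directed bad even cycle of $D_P$. Since every directed bad even cycle is in particular a directed even cycle, $D_P$ has at most $l$ of them; picking one atom from each yields a deletion \DBEC\hy backdoor of size at most $l$, so $k\le l=f(l)$.

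For the second half I would exhibit, for every $n\in\Nat$, a program $P_n$ with $l=n$ even cycles but smallest deletion \DBEC\hy backdoor of size $k=0$; this shows no $g$ with $l<g(k)$ for all $P$ can exist, since instantiating such a $g$ at $P_{g(0)}$ gives $g(0)=l<g(k)=g(0)$, a contradiction. Take atoms $a,b_1,\dots,b_n$ and the Horn rules $a\leftarrow b_i$ and $b_i\leftarrow a$ for $1\le i\le n$. Then $D_{P_n}$ has exactly the edges $(a,b_i)$ and $(b_i,a)$, all positive, and its only simple directed cycles are the $n$ two-cycles $(a,b_i,a)$, each carrying $0$ (an even number of) negative edges; so $l=n$. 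Since $P_n$ has no negative edge at all, it has no bad cycle, hence $P_n\in\DBC\subseteq\DBEC$ and $\emptyset$ is already a deletion \DBEC\hy backdoor, so $k=0$. (If one dislikes length-two cycles, the same works with the blocks $a\leftarrow b_i,\ b_i\leftarrow c_i,\ c_i\leftarrow d_i,\ d_i\leftarrow a$, giving $n$ vertex-disjoint directed even $4$-cycles; chaining such blocks even yields programs with $k$ bounded and exponentially many even cycles.)

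Both arguments are routine once the viewpoint ``deletion backdoor $=$ transversal of the directed bad even cycles'', i.e. $D_{P-X}=D_P-X$, is in place. The one place I would be careful is the bookkeeping for the example: checking that $P_n$ really has exactly the $n$ stated simple directed cycles and no hidden ones, that each is \emph{even} in the paper's sense (parity of negative edges, not of length), and that $P_n\in\DBEC$ outright so that indeed $k=0$. Everything else is immediate.
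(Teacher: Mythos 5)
Your proof is correct and follows essentially the same two-part strategy as the paper: a transversal argument giving $k\le l$ (the paper likewise takes one atom from each bad even cycle), and an unbounded family of programs refuting the existence of $g$. The only difference is in the witness for the second half — the paper appeals to a program whose single backdoor atom lies on arbitrarily many even cycles ($k\le 1$), whereas you give an explicit family with $k=0$ and $n$ good even cycles — your version is more concrete but proves the same thing.
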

\begin{proof}
	Let $P$ be some program. If $P$ has at most $k$ bad even cycles, we can construct a \DBEC-backdoor $X$ for $P$ by taking one element from each bad even cycle into $X$. Thus there is a function $f$ such that $k \leq f(l)$. If a program $P$ has a \DBEC-backdoor of size $1$, it can have arbitrary many even cycles that run through the atom in the backdoor. It follows that there is no function $g$ such that $l < g(k)$ and the proposition holds.
\end{proof}

\section{Conclusion}

We have extended the backdoor approach of~\cite{FichteSzeider11} by taking the parity of the number of negative edges on bad cycles into account. In particular, this allowed us to consider target classes that contain non-stratified programs. 
We have established new hardness results and non-uniform polynomial-time tractability depending on whether we consider directed or undirected even cycles.
We have shown that the backdoor approach with parity target classes generalize a result of Lin and Zhao~\cite{LinZhao04}.
Since Theorem~\ref{the:paranp} states that target classes based on directed even cycles are intractable, we think these target classes are of limited practical interest.
The results of this paper give rise to research questions that are of theoretical interest. For instance, it would be stimulating to find out whether the problem $k$-\textsc{Strong $\CCC$\hy Backdoor Detection} is uniform polynomial-time solvable (fixed-parameter tractable) for the classes \BC and \BEC, which is related to the problems parity feedback vertex set and parity subset feedback vertex set.

\section*{Acknowledgement}
The author would like to thank Stefan Szeider for suggesting the new target class that takes the parity of the number of negative edges into account, for many helpful comments, and his valuable advice.
Serge Gaspers for pointing out the result of Arikati and Peled on finding odd paths in polynomial time and many helpful comments and to the anonymous referees for their helpful comments. 
The author was supported by the European Research Council (ERC), Grant COMPLEX REASON 239962.


\begin{thebibliography}{10}
\providecommand{\url}[1]{\texttt{#1}}
\providecommand{\urlprefix}{URL }

\bibitem{AptBlairWalker88}
Apt, K.R., Blair, H.A., Walker, A.: Towards a theory of declarative knowledge,
  89--148. Morgan Kaufmann (1988)

\bibitem{ArikatiPeled96}
Arikati, S.R., Peled, U.N.: A polynomial algorithm for the parity path problem
  on perfectly orientable graphs. Discrete Applied Mathematics  65(1-3),   5--20 (1996)

\bibitem{DowneyFellows99}
Downey, R.G., Fellows, M.R.: Parameterized Complexity. Monographs in Computer
  Science, Springer (1999)

\bibitem{EiterGottlob95}
Eiter, T., Gottlob, G.: On the computational cost of disjunctive logic
  programming: Propositional case. Annals of Mathematics and AI  15(3-4),  289--323 (1995)

\bibitem{FichteSzeider11}
Fichte, J.K., Szeider, S.: Backdoors to tractable answer-set programming.
  Twenty-Second International Joint Conference on AI (IJCAI) (2011)

\bibitem{GaspersSzeider11}
Gaspers, S., Szeider, S.: Backdoors to Satisfaction. CoRR abs/1110.6387, (2011)

\bibitem{Gelder89}
Van Gelder, A.: The alternating fixpoint of logic programs with negation. 
 Proceedings of the Ninth ACM SIGACT-SIGMOD-SIGART Symposium on Principles of Database Systems. 1--10. ACM (1989)

\bibitem{GelderRossSchlipf91}
Van Gelder, A., Ross, K.A., Schlipf, J.S.: The well-founded semantics for general logic programs. J. ACM  38(3),  620--650 (1991)

\bibitem{GelfondLifschitz88}
Gelfond, M., Lifschitz, V.: The stable model semantics for logic programming.
   Proceedings of the Fifth International Conference and Symposium (ICLP/SLP). vol.~2, 1070--1080. MIT Press (1988)

\bibitem{GelfondLifschitz91}
Gelfond, M., Lifschitz, V.: Classical negation in logic programs and
  disjunctive databases. New Generation Comput.  9(3/4),  365--386 (1991)

\bibitem{GottlobScarcelloSideri02}
Gottlob, G., Scarcello, F., Sideri, M.: Fixed-parameter complexity in {AI} and
  nonmonotonic reasoning. AI  138(1-2), 55--86 (2002)

\bibitem{LapaughPapadimitriou84}
Lapaugh, A.S., Papadimitriou, C.H.: The even-path problem for graphs and
  digraphs. Networks  14(4), 507--513 (1984)

\bibitem{LinZhao04}
Lin, F., Zhao, X.: On odd and even cycles in normal logic programs.  Proceedings of the 19th national conference on AI (AAAI). 80--85. AAAI Press (2004)

\bibitem{MarekTruszczynski99}
Marek, V.W., Truszczynski, M.: Stable models and an alternative logic
  programming paradigm. The Logic Programming Paradigm. 375--398 (1999)

\bibitem{MontalvaAracenaGajardo08}
Montalva, M., Aracena, J., Gajardo, A.: On the complexity of feedback set
  problems in signed digraphs. Electronic Notes in Discrete Mathematics  30,
  249--254 (2008)

\bibitem{Niemela99}
Niemel{\"a}, I.: {Logic programs with stable model semantics as a constraint
  programming paradigm}. Annals of Mathematics and AI
  25(3),  241--273 (1999)

\bibitem{RobertsonSeymourThomas99}
Robertson, N., Seymour, P., Thomas, R.: {Permanents, Pfaffian orientations, and even directed circuits}. Annals of Mathematics  150(3),  929--975 (1999)

\bibitem{Schaub08}
Schaub, T.: Collection on answer set programming (ASP) and more. Tech. rep.,
  University of Potsdam, \url{http://www.cs.uni-potsdam .de/~torsten/asp} (2008)

\bibitem{VaziraniYannakakis88}
Vazirani, V., Yannakakis, M.: Pfaffian orientations, 0/1 permanents, and even
  cycles in directed graphs. Automata, Languages and Programming, LNCS vol. 317, 667--681. Springer (1988)

\bibitem{WilliamsGomesSelman03}
Williams, R., Gomes, C., Selman, B.: Backdoors to typical case complexity. Proceedings of the Eighteenth International Joint Conference on AI (IJCAI). 1173--1178. Morgan Kaufmann (2003)

\bibitem{WilliamsGomesSelman03a}
Williams, R., Gomes, C., Selman, B.: On the connections between backdoors, restarts, and heavy-tailedness in combinatorial search. Proceedings of the Sixth International Conference on Theory and Applications of Satisfiability Testing (SAT). 222--230. Morgan Kaufmann (2003)

\bibitem{YusterZwick94}
Yuster, R., Zwick, U.: Finding even cycles even faster. Automata, Languages and Programming, LNCS vol. 820, 532--543. Springer (1994)

\bibitem{Zhao02}
Zhao, J.: A Study of Answer set Programming. Mphil thesis, The Hong Kong
  University of Science and Technology, Dept. of Computer Science (2002)

\end{thebibliography}
\end{document}